\LetLtxMacro{\oldtextsc}{\textsc}
\renewcommand{\textsc}[1]{\oldtextsc{\scalefont{1.10}#1}}
\def\shownotes{1}  %
\newcommand{\authnote}[2]{[#1: #2]}
\newcommand{\authnote}[2]{}
\newcommand{\Pid}{P_\mathrm{id}}
\newcommand{\Pood}{P_\mathrm{ood}}
\newcommand{\Porth}{P_\mathrm{orth}}
\newcommand{\vfl}{v_\mathrm{fl}}
\newcommand{\Bfl}{B_\mathrm{fl}}
\newcommand{\vft}{v_\mathrm{ft}}
\newcommand{\Bft}{B_\mathrm{ft}}
\newcommand{\R}{\mathbb{R}}
\newcommand{\colspace}{\mbox{colspace}}
\newcommand{\fvb}{f_{v, B}}
\newcommand{\Psrc}{P_\mathrm{src}}
\newcommand{\Ptrg}{P_\mathrm{trg}}
\newcommand{\Lsrc}{L_{\mathsf{src}}}
\newcommand{\xsrc}{x_{\mathsf{src}}}
\newcommand{\ysrc}{y_{\mathsf{src}}}
\newcommand{\vsrc}{v_{\mathsf{src}}}
\newcommand{\Bsrc}{B_{\mathsf{src}}}
\newcommand{\vhatsrc}{\hat{v}_{\mathsf{src}}}
\newcommand{\Bhatsrc}{\hat{B}_{\mathsf{src}}}
\newcommand{\Ltrg}{L_{\mathsf{trg}}}
\newcommand{\Ltrgemp}{\hat{L}_{\mathsf{trg}}}
\newcommand{\xtrg}{x_{\mathsf{trg}}}
\newcommand{\ytrg}{y_{\mathsf{trg}}}
\newcommand{\vinit}{v_0}
\newcommand{\Binit}{B_0}
\colorlet{my-red}{BrickRed!90!Sepia}
\colorlet{my-blue}{Aquamarine!30!Blue}
\patchcmd{\BR@backref}{\newblock}{\newblock[page~}{}{}
\patchcmd{\BR@backref}{\par}{]\par}{}{}
\crefname{equation}{}{}
\Crefname{equation}{}{}
\DeclareMathOperator*{\argmin}{arg~min}
\newcommand{\subalign}[1]{%
  \vcenter{%
    \Let@ \restore@math@cr \default@tag
    \baselineskip\fontdimen10 \scriptfont\tw@
    \advance\baselineskip\fontdimen12 \scriptfont\tw@
    \lineskip\thr@@\fontdimen8 \scriptfont\thr@@
    \lineskiplimit\lineskip
    \ialign{\hfil$\m@th\scriptstyle##$&$\m@th\scriptstyle{}##$\hfil\crcr
      #1\crcr
    }%
  }%
}
\newtheorem{lemma}{Lemma}
\newtheorem{proposition}{Proposition}
\newtheorem{theorem}{Theorem}
\newcommand{\Real}{\mathbb R}
\newcommand{\E}{\mathbb E}
\DeclareMathOperator{\Tr}{Tr}
\newcommand{\calL}{\mathcal{L}}
\DeclareMathSymbol{\Rho}{\mathalpha}{operators}{"50}
\author{%
  Yoonho Lee$^*$ \\
  Stanford University \\
  \And
  Annie S. Chen$^*$ \\
  Stanford University \\
  \And
  Fahim Tajwar \\
  Stanford University \\
  \And
  Ananya Kumar \\
  Stanford University \\
  \And
  Huaxiu Yao \\
  Stanford University \\
  \And
  Percy Liang \\
  Stanford University \\
  \And
  Chelsea Finn \\
  Stanford University
}
\title{
Surgical Fine-Tuning Improves Adaptation to Distribution Shifts
}
\begin{document}
\maketitle 
\def\thefootnote{$*$}\footnotetext{Equal contribution. Correspondence to \href{mailto:yoonho@stanford.edu}{yoonho@stanford.edu} and \href{mailto:asc8@stanford.edu}{asc8@stanford.edu}.}

\begin{abstract}
A common approach to transfer learning under distribution shift is to fine-tune the last few layers of a pre-trained model, preserving learned features while also adapting to the new task.
This paper shows that in such settings, selectively fine-tuning a subset of layers (which we term \textit{surgical fine-tuning}) matches or outperforms commonly used fine-tuning approaches.
Moreover, the type of distribution shift influences which subset is more effective to tune: for example, for image corruptions, fine-tuning only the first few layers works best. 
We validate our findings systematically across seven real-world data tasks spanning three types of distribution shifts.
Theoretically, we prove that for two-layer neural networks in an idealized setting, first-layer tuning can outperform fine-tuning all layers. 
Intuitively, fine-tuning more parameters on a small target dataset can cause information learned during pre-training to be forgotten, and the relevant information depends on the type of shift.

\end{abstract}

\section{Introduction}
\label{sec:intro}

While deep neural networks have achieved impressive results in many domains, they are often brittle to even small distribution shifts between the source and target domains~\citep{recht2019imagenet,hendrycks2019robustness,koh2021wilds}.
While many approaches to robustness attempt to directly generalize to the target distribution after training on source data~\citep{peters2016causal,arjovsky2019invariant}, an alternative approach is to fine-tune on a small amount of labeled target datapoints. 
Collecting such small labeled datasets can improve downstream performance in a cost-effective manner while substantially outperforming domain generalization and unsupervised adaptation methods~\citep{rosenfeld2022domain,kirichenko2022last}.
We therefore focus on settings where we first train a model on a relatively large source dataset and then fine-tune the pre-trained model on a small target dataset, as a means of adapting to distribution shifts.

The motivation behind existing fine-tuning methods is to fit the new data while also preserving the information obtained during the pre-training phase. Such information preservation is critical for successful transfer learning, especially in scenarios where the source and target distributions share a lot of information despite the distribution shift. To reduce overfitting during fine-tuning, existing works have proposed using a smaller learning rate compared to initial pretraining~\citep{kornblith2019better, Li2020Rethinking}, freezing the early backbone layers and gradually unfreezing~\citep{howard2018universal,mukherjee2019distilling,romero2020targeted}, or using a different learning rate for each layer~\citep{ro2021autolr, shen2021partial}.

We present a result in which preserving information in a non-standard way results in better performance. 
Contrary to conventional wisdom that one should fine-tune the last few layers to re-use the learned features, we observe that fine-tuning only the \textit{early layers} of the network results in better performance on image corruption datasets such as CIFAR-10-C~\citep{hendrycks2019robustness}. More specifically, as an initial finding, when transferring a model pretrained on CIFAR-10 to CIFAR-10-C by fine-tuning on a small amount of labeled corrupted images, fine-tuning only the first block of layers and freezing the others outperforms full fine-tuning on all parameters by almost 3\% on average on unseen corrupted images.

To better understand this counterintuitive result, we study a general class of fine-tuning algorithms which we call \textit{surgical fine-tuning}, defined as fine-tuning only a small contiguous subset of all layers in the pre-trained neural network.
Equivalently, we could define surgical fine-tuning as freezing all but a few layers during fine-tuning.
Parameter freezing can be beneficial because, depending on the relationship between the source and target tasks, some layer parameters trained on the source task may be close to a minima for the target distribution.
Therefore, freezing these layers can facilitate generalization to the target distribution.
We evaluate the performance of surgical fine-tuning with various layer choices on $7$ different distribution shift scenarios, which we categorize into input-level, feature-level, and output-level shifts. 
As shown in \cref{fig:fig1}, fine-tuning only the first block of layers, the middle block, or the last layer can perform best in different distribution shift conditions, with the best such subset consistently outperforming fine-tuning all parameters.

To support our empirical results, we theoretically analyze why different types of distribution shifts require fine-tuning different layers.
For two-layer neural networks, we show why fine-tuning the first layer is better for input perturbations but fine-tuning the last layer is better for label perturbations.
We then present a setting where surgical fine-tuning on the first layer provably outperforms fine-tuning all parameters.
If the target distribution contains only a few new ``directions'' (inputs outside the span of the source distribution),
we show that tuning only the first layer can learn these new directions with very few target examples, while preserving all the information learned from the source distribution.
However, we show that full fine-tuning forgets information learned from the source distribution---the last layer changes to accommodate the new target directions, but now performs poorly on examples outside the span of the training data.
Motivated by the theoretical insight that freezing some layers can help generalization, we empirically analyze two criteria for automatically selecting layers to tune based on loss gradients.
Tuning the layers selected by such criteria can also outperform full fine-tuning, though this procedure does not outperform manually choosing the best layers to tune.

Our main contribution is the empirical observation that fine-tuning only a small contiguous subset of layers can outperform full fine-tuning on a range of distribution shifts. 
Intriguingly,
the best layers to tune differ for different distribution shift types (\cref{fig:fig1}).
This finding is validated empirically across seven real-world datasets and three types of distribution shifts, and theoretically in an idealized two-layer neural network setup. We additionally empirically analyze two criteria for automatically selecting which layers to tune and find that fine-tuning only the layers with higher relative gradient norm outperforms full fine-tuning.

\section{Surgical Fine-Tuning: Freezing Parameters During Adaptation}
\label{sec:surgical_ft}

\begin{figure}
\vspace{-0.7cm}
\centering\includegraphics[width=0.96\linewidth]{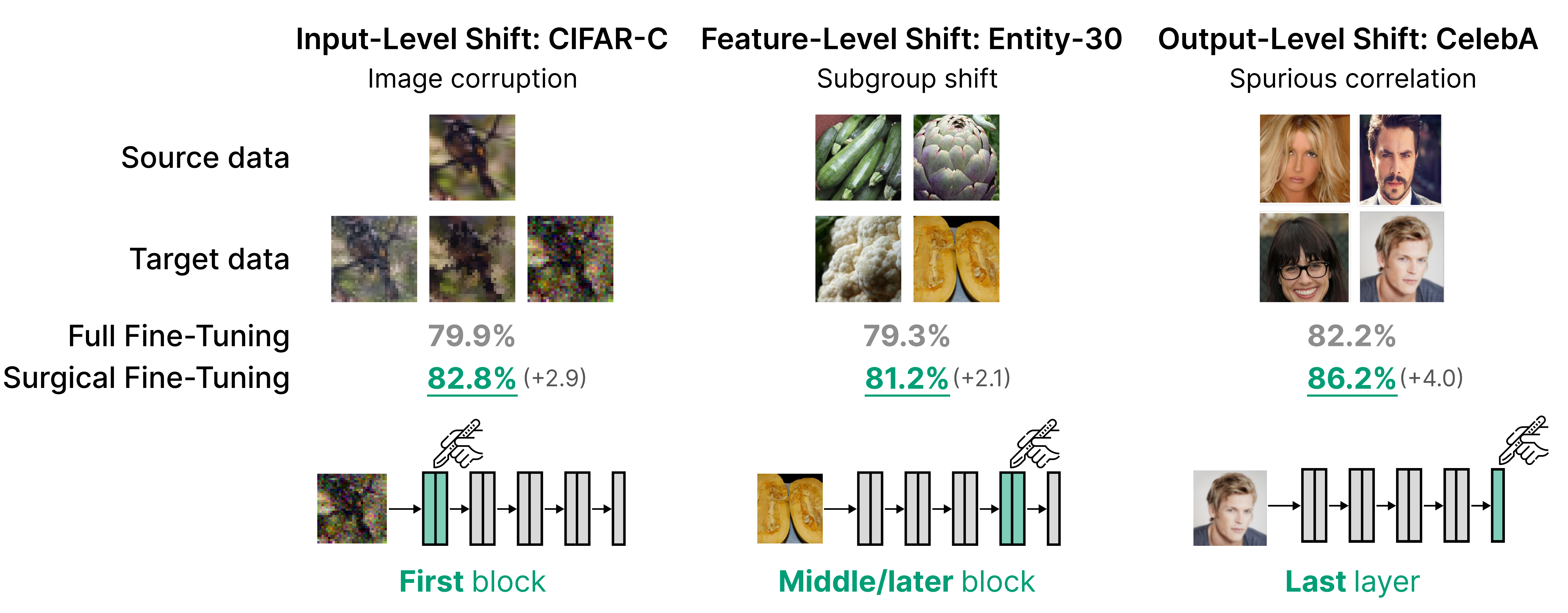}
\vspace{-0.2cm}
\caption{%
\label{fig:fig1}
\footnotesize Surgical fine-tuning, where we tune only one block of parameters and freeze the remaining parameters, outperforms full fine-tuning on a range of distribution shifts. 
Moreover, we find that tuning different blocks performs best for different types of distribution shifts.
Fine-tuning the first block works best for input-level shifts such as CIFAR-C (image corruption), later blocks work best for feature-level shifts such as Entity-30 (shift in entity subgroup), and tuning the last layer works best for output-level shifts such as CelebA (spurious correlation between gender and hair color).
}
\label{fig:datasets}
\vspace{-0.5cm}
\end{figure}

Our problem setting assumes two datasets from different distributions: a large dataset following the source distribution $P_\mathrm{src}$, and a relatively smaller dataset following the target distribution $P_\mathrm{tgt}$.
The objective is to achieve high accuracy on \textit{target data} by leveraging the different but closely related source distribution, a common scenario in real-world applications that require adaptation.
For example, the source dataset can be the $50,000$ training images in CIFAR-10 \citep{krizhevsky2009learning} while the target dataset is a smaller set of $1000$ corrupted CIFAR datapoints with the same image corruption \citep{hendrycks2019robustness}; see \cref{fig:datasets} for more examples of source-target dataset pairs that we consider.
To achieve high performance on the target distribution, a model should broadly fit the large source dataset and make minor adjustments based on the smaller target dataset.

We empirically evaluate transfer learning performance with a two-stage training procedure consisting of pre-training and fine-tuning.
First, we pre-train a network to minimize the loss on the source dataset to obtain $f_\mathrm{src}$, which has high accuracy in the source distribution.
The fine-tuning stage starts from pre-trained model parameters and minimizes the loss on the labeled target data, resulting in the model $f^\mathrm{tgt}$.
We evaluate two fine-tuning settings in this section: supervised fine-tuning (\cref{subsec:exp_surgical}) and unsupervised adaptation (\cref{subsec:test_time_adapt}).
In all experiments, we perform early stopping on held-out target data according to the fine-tuning loss.
Finally, we evaluate the performance of the fine-tuned model on held-out data from the target distribution, i.e. $\calL_\mathrm{tgt}(f^\mathrm{tgt}) = \E_{(x, y) \sim P_\mathrm{tgt}}[\ell (f^\mathrm{tgt}(x), y)]$.

Our main focus is analyzing \textbf{surgical fine-tuning}, in which we fine-tune only a subset of layers of the pre-trained model while keeping the others frozen.
Denote the pre-trained model as $f = f_n \circ \ldots \circ f_1(x)$, where each layer $f_i$ has parameters $\theta_i$, and the empirical target loss as $\widehat{\calL}_\mathrm{tgt}$.
Formally, surgical fine-tuning with respect to a subset $S \subseteq \{1, \ldots, n\}$ of layers is defined as solving the optimization problem
\begin{align}
    \argmin_{\theta_i \, \forall i \in S} \widehat{\calL}_\mathrm{tgt}(f(\theta_1, \ldots, \theta_n)), 
\end{align}
where all non-surgery parameters ($\theta_i$ for $i \notin S$) are fixed to their pre-trained values.
Typical choices of parameters to optimize are fine-tuning all ($S = \{1, \ldots, n\}$), last ($S = \{n\}$), or the last few layers ($S = \{n-k,\cdots,n\}$).
The main novelty of the surgical fine-tuning framework is that it additionally considers tuning earlier layers while keeping later layers frozen.
For example, surgical fine-tuning on the first layer ($S= \{1\}$) updates only $\theta_1$, resulting in the fine-tuned model ${\color{green!60!black}f^\textrm{tgt}} (x) = f^{\textrm{src}}_n \circ \ldots \circ f^\textrm{src}_2 \circ {\color{green!60!black}f^{\textrm{tgt}}_1} (x)$. 

Intuitively, surgical fine-tuning can outperform full fine-tuning when some layers in $f^\mathrm{src}$ are already near-optimal for the target distribution.
As a hypothetical example, consider a scenario where there exist first-layer parameters $\theta_1^*$ such that changing only the first layer of $f_\textrm{src}$ to $\theta_1^*$ achieves zero target loss.
Here, first-layer fine-tuning ($S = \{1\}$) can find $\theta_1^*$ with a small amount of target data, while full fine-tuning ($S=\{1, \ldots, n\}$) may needlessly update the other layers and thus underperform on held-out target data due to overfitting.
We note that the efficacy of parameter freezing is a consequence of having limited target data, and choosing a bigger $S$ will be beneficial in settings where target data is plentiful.
Now that we have introduced the problem set-up, we will next empirically investigate how surgical fine-tuning with different choices of $S$ performs on real datasets.

\subsection{Surgical Fine-Tuning: Experiments on Real Data}
\label{subsec:exp_surgical}

In this subsection, we aim to empirically answer the following question: how does surgical parameter fine-tuning compare to full fine-tuning in terms of sample efficiency and performance on real-world datasets? 

\textbf{Datasets.}
We run experiments on nine real-world distribution shifts, categorized into input-level, feature-level, output-level, and natural shifts, with examples shown in \cref{fig:datasets}.
For more details about these datasets, see Appendix~\ref{sec:app-datasets}.
\begin{itemize}[leftmargin=*]
    \item \textbf{Input-level shift}: (1) \textbf{CIFAR-C} \citep{hendrycks2019robustness}, (2) \textbf{ImageNet-C} \citep{kar20223d}. The source distributions correspond to the original CIFAR-10 and ImageNet datasets~\citep{krizhevsky2009learning,deng2009imagenet}, respectively. The task is to classify images from the target datasets, which consist of corrupted images.
    \item \textbf{Feature-level shift}: (3) \textbf{Living-17} and (4) \textbf{Entity-30} \citep{santurkar2020breeds}: While the source and target distributions consist of the same classes, they contain different subpopulations of those classes. For example, in Entity-30, for the class ``vegetables'', $P_\mathrm{src}$ and $P_\mathrm{tgt}$  will contain different subclasses of vegetables. 
    \item \textbf{Output-level shift}: (5) \textbf{CIFAR-Flip}, (6) \textbf{Waterbirds}, and (7) \textbf{CelebA} \citep{sagawa2019distributionally}. CIFAR-Flip is a synthetic task where the $P_\mathrm{src}$ consists of the original CIFAR-10 dataset and the target distribution is the same dataset where each label $y$ has been flipped to be $9-y$, e.g. the label 0 is now label 9 and vice versa. For Waterbirds and CelebA, the task labels are spuriously correlated with an attribute. The source distribution $P_\mathrm{src}$ is the training set while the target distribution $P_\mathrm{tgt}$ is a balanced subset with equal amounts of each of the four (label, spurious attribute) groups.
    \item \textbf{Natural shift}: (8) \textbf{Camelyon17} \citep{camelyon} and (9) \textbf{FMoW} \citep{fmow}, part of the \text{WILDS} \citep{koh2021wilds} benchmark. 
    Camelyon17 contains medical images from different hospitals where variations in data collection and processing produces naturally occurring distribution shifts across data from different hospitals. 
    FMoW contains satellite imagery of $62$ types of buildings or land use, with both temporal and sub-population shifts resulting from geographical diversity.
\end{itemize}

\textbf{Model architectures and pre-training.} For each task, before pre-training on $P_\mathrm{src}$, we initialize with a model with ResNet-50 architecture pre-trained on ImageNet, except for experiments with CIFAR-C and CIFAR-Flip, which both use a ResNet-26 trained from scratch, and experiments on Camelyon17 and FMoW, which use a pre-trained CLIP ViT-B/16. 
After initialization, we pre-train on the source domain $P_\mathrm{src}$ and then fine-tune on a small amount of data from the target domain.
We fine-tune with the Adam optimizer, sweeping over $3$ learning rates. We choose the best hyperpameters and early stop based on accuracy on held-out target data. We report results across $3$ seeds for all experiments. See \cref{sec:app-training} for more fine-tuning details.

\textbf{Surgical fine-tuning.} 
The models used consist of three (for ResNet-26) or four (for ResNet-50) convolutional blocks followed by a final fully connected layer. 
We denote these blocks as "Block 1", "Block 2", etc in the order that they process the input, and the fully connected layer as "Last Layer". 
CLIP ViT-B/16 has 1 embedding layer, 11 attention blocks and a linear classifier as the last layer. 
\begin{wrapfigure}{r}{0.4\textwidth}
\centering
\resizebox{0.97\linewidth}{!}{ 
\begin{tabular}{lcc}
    \toprule
    Parameters & Camelyon17 & FMoW \\
    \midrule
    No fine-tuning & 86.2 & 35.5 \\
    \midrule
    All & 92.3 (1.7) & 38.9 (0.5)\\
    Embedding & \textbf{95.6 (0.4)} &  36.0 (0.1)\\
    First three & 92.5 (0.5) & 39.8 (1.0)\\
    Last three & 87.5 (4.1) & \textbf{44.9 (2.6)}\\
    Last layer & 90.1 (1.5) & 36.9 (5.5)\\
    \bottomrule
\end{tabular}
}
\captionof{table}{
\label{tab:wilds}
OOD set accuracies after surgically fine-tuning different parameters in a CLIP ViT-B/16 model for two WILDS datasets. Bold numbers represent superior results for a dataset, and we also report the standard deviation from runs with 3 different seeds.}
\vspace{-0.3cm}
\end{wrapfigure}
For each experimental setting, we report the relative target distribution accuracy and standard error across three runs after surgical fine-tuning on each block of the network, fine-tuning only that block while freezing all other parameters. 
We compare against full fine-tuning, i.e. tuning all parameters to minimize target loss.

\begin{figure}
\vspace{-0.5cm}
\centering\includegraphics[width=\linewidth]{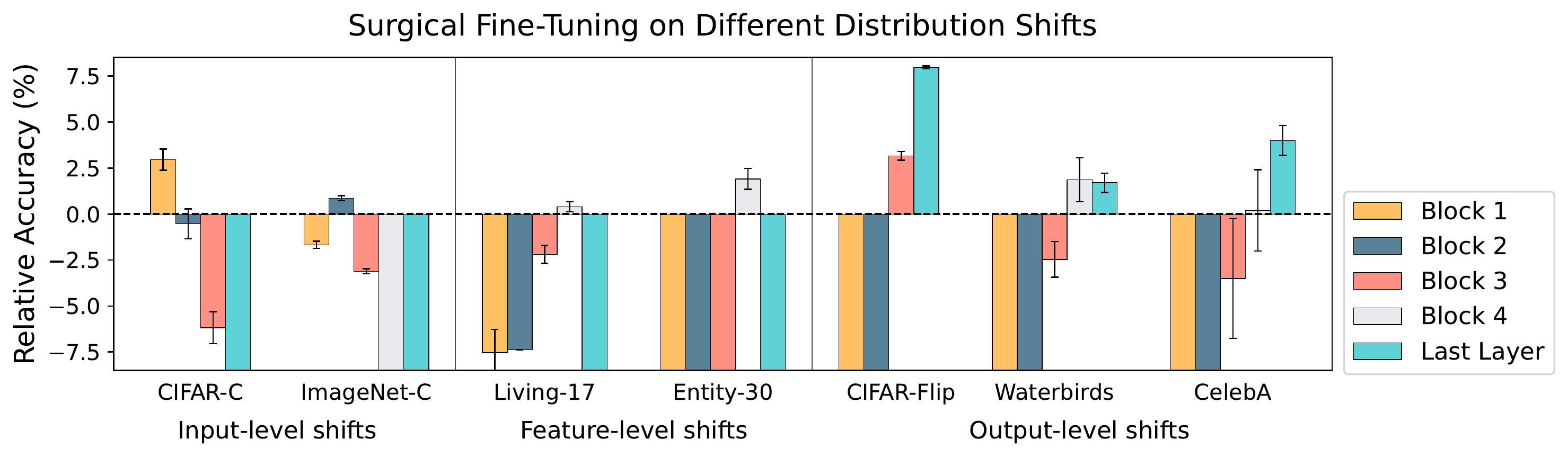}
\vspace{-0.5cm}
\caption{%
\footnotesize We present plots of relative accuracy, i.e. (surgical fine-tuning accuracy) - (full fine-tuning accuracy), along with standard errors across three runs.
Fine-tuning a single parameter block can outperform full fine-tuning, and more importantly,  different blocks are best for different distribution shifts.
The location of the best block reflects the nature of the shift: tuning earlier layers performs best for input-level shifts while tuning later layers is best for output-level shifts.
}
\vspace{-0.3cm}
\label{fig:manual}
\end{figure}

\textbf{Experimental results.} 
Results in~\cref{fig:manual} show that on every domain, surgically fine-tuning one block of the network outperforms tuning all parameters on the target distribution. 
We note that even matching full fine-tuning performance with surgical fine-tuning would indicate that ignoring some gradients is harmless; these results show that ignoring some gradients has a \textit{positive effect}.
Furthermore, we find that the best block to fine-tune is different across settings, depending on the nature of the distribution shift between source and target data.
Datasets with an input-level shift are best handled by tuning the first network block, and similarly for feature-level shifts with a middle block and output-level shifts with the last layer. 
We see the a similar phenomenon in the natural shifts: Camelyon17 is closer to an input-level shift due to the lighting difference in different hospitals, while the shift between different regions in FMoW can be seen as close to a feature-level shift because building shape and spacing is most salient in satellite imagery.
Quantitative results in in \cref{tab:wilds} are in agreement with this intuition, where tuning the earliest embedding layer works best for Camelyon17 and tuning later attention blocks works best for FMoW.
Following~\citet{kumar2022sgdfinetuning}, we also evaluate fine-tuning performance with the AdamW optimizer; results in~\cref{tab:vit_adamw} show a similar tendency but with smaller performance gaps.
In \cref{fig:surgical_few_shot}, we find that on CIFAR-C, fine-tuning the first block matches and even outperforms full fine-tuning as well as tuning with other individual blocks when given varying amounts of data for tuning, although the gap between Block 1 and All decreases as the number of training points increases. 

Intuitively, why might surgical fine-tuning match or even outperform full fine-tuning on distribution shifts? For each type of shift we consider (input-level, feature-level, and output-level), there is a sense in which one aspect of the distribution changes while everything else is kept the same, therefore requiring modification of only a small part of information learned during pre-training.
For example, in image corruptions (categorized as an input-level shift), pixel-wise local features are shifted while the underlying structure of the data is the same in the source and target distributions. On the other hand, in a label shift (categorized as an output-level shift), the pixel-wise features remain the same in the source and target distributions while the mapping from final features to labels is shifted.
This intuition is also in line with the independent causal mechanisms (ICM) principle \citep{scholkopf2012causal,peters2017elements}, which states that the causal generative process of a system's variables is composed of autonomous modules that do not inform or influence one another. 
From this viewpoint, distribution shifts should correspond to local changes in the causal generative process. 
Because discriminative models learn to invert the generative process from label to datapoint, it suffices to fine-tune only the region of the network that corresponds to the change in the causal process.
We formalize this intuition more concretely in our theoretical analysis in \cref{sec:theory}.

\subsection{Unsupervised Adaptation With Parameter Freezing}
\label{subsec:test_time_adapt}

\begin{figure}
\vspace{-30pt}
\begin{minipage}{0.34\linewidth}
\centering
\includegraphics[width=\linewidth]{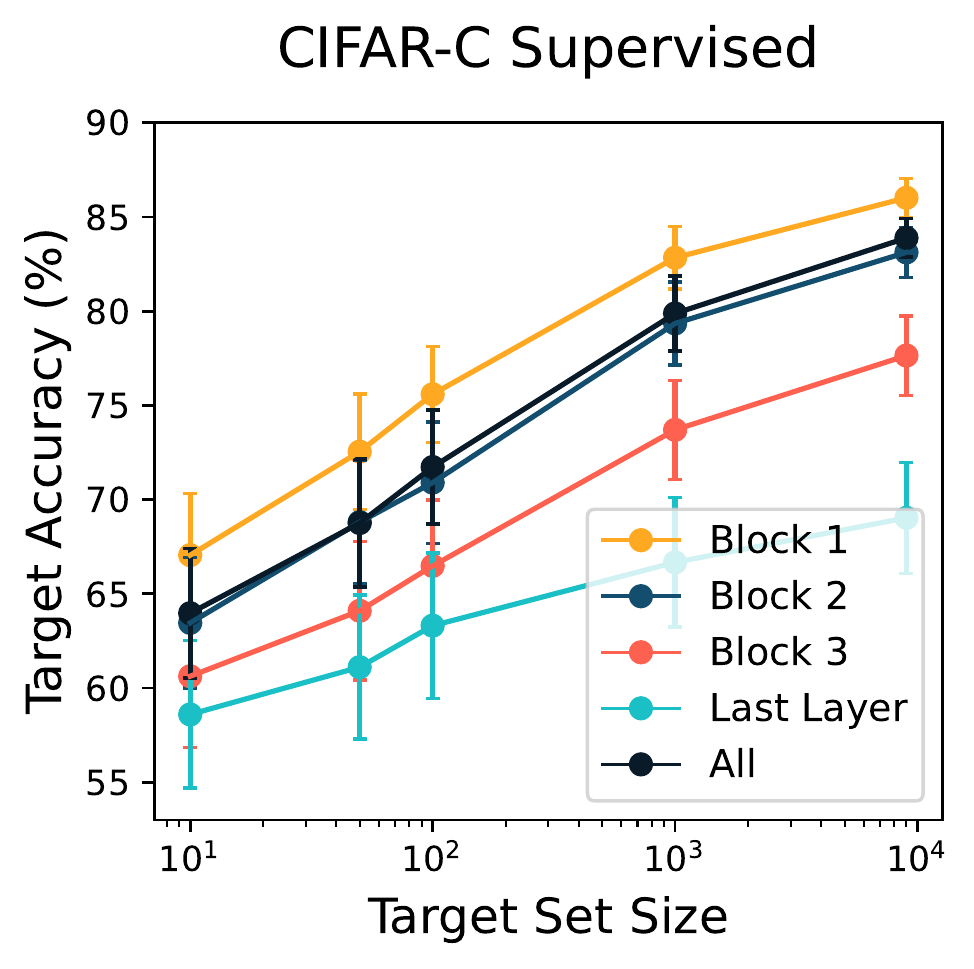}
\caption{ 
Surgical fine-tuning results on CIFAR-10-C with varying amounts of target data. Fine-tuning the early layers is beneficial even in the few-shot setting, given as few as 1 image per class for tuning. Error bars indicate the average standard error over 14 corruptions.
}
\end{minipage} \hfill
\begin{minipage}{0.32\linewidth}
\includegraphics[width=\linewidth]{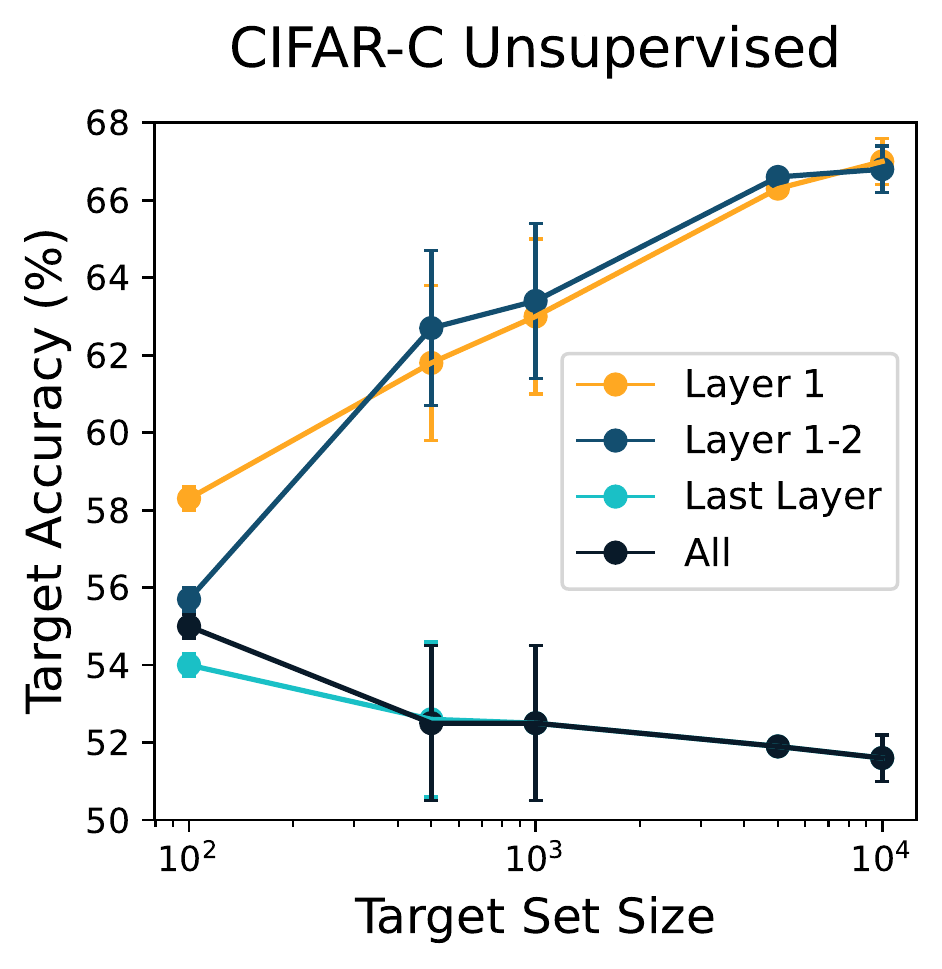}
\caption{ 
Online unsupervised adaptation with surgical fine-tuning for the Gaussian corruption in the CIFAR-C dataset.
Adding data in an online fashion to full or last-layer tuning results in worse performance, whereas more data helps for first-layer adaptation.
Error bars indicate the standard error across three runs.
}
\label{fig:surgical_few_shot}
\end{minipage} \hfill
\begin{minipage}{0.3\linewidth}
\centering
\resizebox{\linewidth}{!}{%
\begin{tabular}{lcc}
\toprule
Parameters & Episodic & Online \\
\midrule
No adaptation & 67.8 & - \\
\midrule
All & \textbf{71.7} & 69.7 \\
\midrule
Layer 1 & 69.0 & 75.4 \\
Layer 1-2 & 69.0 & \textbf{75.5} \\
Block 1-2 & 69.0 & 75.3 \\
Last & 67.9 & 67.8 \\
\bottomrule
\end{tabular}
}
\vspace{-5pt}
\captionof{table}{
\label{tab:tta_results}
\footnotesize Unsupervised adaptation accuracy on CIFAR-10-C, averaged across $10$ corruptions.}
\vspace{10pt}
\resizebox{\linewidth}{!}{%
\begin{tabular}{lcc}
\toprule
Parameters & Episodic & Online \\
\midrule
No adaptation & 46.5 & - \\
\midrule
All & \textbf{47.8} & 1.8 \\
\midrule
Layer 1 & 46.6 & 46.4 \\
Block 1 & 46.7 & \textbf{49.0} \\
Last & 46.5 & 46.5 \\
\bottomrule
\end{tabular}
}
\vspace{-5pt}
\captionof{table}{\footnotesize Unsupervised adaptation accuracy on ImageNet-C, averaged across $14$ corruptions.}
\label{tab:tta_results_imagenet}
\end{minipage}
\vspace{-10pt}
\end{figure}

In this subsection, we aim to validate whether the findings from \cref{subsec:exp_surgical} hold in the unsupervised test-time adaptation setting, where we adapt a model trained on source data to the target distribution using only unlabeled target data. We experiment with variants of a representative state-of-the-art unsupervised adaptation method~\citep[MEMO]{zhang2021memo}, which minimizes marginal entropy of averaged predictions for a single image.
We consider two settings: \textit{online}, where the model retains updates from past test images, and \textit{episodic}, where we reset the model back to the pre-trained weights after every test image. 

Results in \cref{tab:tta_results} and \cref{tab:tta_results_imagenet} show that the highest accuracy is achieved by adapting the first two layers and first block in the online setting for CIFAR-10-C and ImageNet-C respectively, and doing so outperforms fine-tuning all parameters. With full fine-tuning, online MEMO performance deteriorates as the test set size increases due to distortion of pre-trained features, as shown graphically in \cref{fig:surgical_few_shot}.
In contrast, surgical fine-tuning mitigates this effect.
These results are consistent with the supervised learning experiments in \cref{subsec:exp_surgical}, where adapting the early parameters was best for image corruption datasets. We show detailed results in \cref{subsec:complete_tta}.

\section{Analysis of Surgical Fine-Tuning} 
\label{sec:theory}

We now present a theoretical and empirical analysis on idealized examples of distribution shifts, to better understand the role of surgical parameter tuning in our previous experimental results.
In \cref{subsec:theory_relu}, we present a setting with two-layer neural networks where tuning only the first layer can obtain zero loss on the target task while tuning only the last layer cannot and vice versa. 
Then, in \cref{subsec:theory_linear}, we study a setting in which tuning only the first layer provably achieves zero loss while full fine-tuning overfits and gets non-zero loss due to limited data. 
Finally, in \cref{subsec:noise-exps}, to support this theoretical analysis, we construct distribution shifts where localized subsets of parameters are substantially better suited for adaptation than tuning all or other parameters.

\paragraph{Theoretical setup.}
We focus on regression, where our goal is to map inputs $x \in \R^d$ to outputs $y \in R$, and $l(y, \hat{y}) = (y - \hat{y})^2$ is the squared loss.
We consider two-layer networks $f_{v, B}(x) = v^\top \phi(Bx)$ where  $v \in \mathbb{R}^{k}$, $B \in \mathbb{R}^{k \times d}$, and $\phi$ is an elementwise activation function such as ReLU.
Let $\xsrc, \ysrc \sim \Psrc$ and $\xtrg, \ytrg \sim \Ptrg$ be the inputs and outputs in the source and target distributions. 
We assume $\ysrc = f_{\vsrc, \Bsrc}(\xsrc)$ for some $\vsrc, \Bsrc$.
Note that $\xsrc, \ysrc, \xtrg, \ytrg$ are all random variables, and expectations are taken over all random variables if not specified.
We define the population losses for source and target as $\Lsrc(v, B) = \E\big[ l(\fvb(\xsrc), \ysrc) \big]$ and $\Ltrg(v, B) = \E\big[ l(\fvb(\xtrg), \ytrg) \big]$.

\subsection{Layer Choice and Expressivity: Why Fine-Tuning the Right Layer Matters}
\label{subsec:theory_relu}

First note that for two-layer neural networks, we have two choices for surgical fine-tuning: the first layer and the last layer.
We show by construction that if the distribution shift is closer to the input then first-layer tuning is better, but if the shift is closer to the output then last-layer tuning is better.
In this section, we assume that $\phi$ is the elementwise ReLU function: $\phi(x)_i = \max(x_i, 0)$.
Recall that we first train on lots of source data---suppose this gives us pretrained parameters $\vhatsrc, \Bhatsrc$ which achieve minimum \emph{source} loss: $\Lsrc(\vhatsrc, \Bhatsrc) = 0$.

\paragraph{Input perturbation.} Suppose that the target input is a ``perturbed'' or ``corrupted'' version of the source input: $\xtrg = A\xsrc$ for some invertible matrix $A \in \R^{n \times n}$, where the corresponding label is unchanged: $\ytrg = \ysrc$. 
We note that this simplified perturbation class includes some common image corruptions as brightness shift and Gaussian blur as special cases, while others such as pixelation are similarly linear projections but non-invertible.
\cref{prop:input-perturbation} shows that for this distribution shift, tuning only the first layer can minimize the target loss but only changing the last layer may not.

\begin{proposition}
\label{prop:input-perturbation}
For all $A, \Psrc, \Ptrg$ with $\xtrg = A\xsrc$ for invertible $A$ and $\ytrg = \ysrc$, there exists a first-layer $B$ that can minimize the target loss: $\min_B \Ltrg(\vhatsrc, B) = 0$.
However, changing the last layer may not be sufficient: there exists such $A, \Psrc, \Ptrg$ such that the target loss is non-zero for any choice of last layer $v$: for all $i$, $\min_v \Ltrg(v, \Bhatsrc) > 0$.
\end{proposition}

Intuitively, the first-layer can learn to ``undo'' the perturbation by selecting $B = A^{-1}$. However, if we freeze the first-layer then the representations $\phi(\Bhatsrc \xtrg)$ may miss important input directions in the target, so no last-layer $v$ can produce the correct output. 
For a full statement and proof, see \cref{subsec:app:theory_relu}.

\paragraph{Label perturbation.} Now suppose that the source and target inputs are the same $\xtrg = \xsrc$, but the target output is perturbed from the source output: $\ytrg = t \ysrc$ for some $t$. \cref{prop:label-perturbation} shows that tuning only the first layer may not achieve non-zero target loss for this distribution shift while tuning the last layer will do so. 

\begin{proposition}
\label{prop:label-perturbation}
For all $t, \Psrc, \Ptrg$ with $\xtrg = \xsrc$, and $\ytrg = t \ysrc$, there exists a last-layer $v$ that can minimize the target loss: $\min_v \Ltrg(v, \Bhatsrc) = 0$. However, changing the first layer may not be sufficient---there exists such $t, \Psrc, \Ptrg$ such that the target loss is non-zero for any choice of first layer $B$: $\min_B \Ltrg(\vhatsrc, B) = 0$
\end{proposition}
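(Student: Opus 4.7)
The plan is to dispatch the two halves of the proposition separately. For the positive half (last-layer tuning suffices), I will exploit the linearity of the network in $v$ together with the multiplicative form $\ytrg = t\ysrc$: rescaling $\vhatsrc$ by $t$ should reproduce the perturbed labels exactly, since the pretrained parameters already satisfy $\vhatsrc^\top \phi(\Bhatsrc \xsrc) = \ysrc$ almost surely (a consequence of $\Lsrc(\vhatsrc, \Bhatsrc) = 0$ under squared loss). For the negative half, I will construct an explicit instance in which $\vhatsrc$ has all nonnegative coordinates so that, by nonnegativity of ReLU, the frozen-last-layer network outputs a nonnegative value regardless of the first-layer choice $B$; if $t<0$ produces labels of the opposite sign with positive probability, no $B$ can fit, giving $\min_B \Ltrg(\vhatsrc,B) > 0$ (the proposition's trailing ``$=0$'' appears to be a typo for ``$>0$'').

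For the positive direction, the entire argument is the single calculation
\begin{align*}
\Ltrg(t\vhatsrc, \Bhatsrc) = \E\bigl[(t\vhatsrc^\top \phi(\Bhatsrc \xtrg) - \ytrg)^2\bigr] = t^2\,\E\bigl[(\vhatsrc^\top \phi(\Bhatsrc \xsrc) - \ysrc)^2\bigr] = t^2\,\Lsrc(\vhatsrc, \Bhatsrc) = 0,
\end{align*}
where the second equality uses $\xtrg = \xsrc$ and $\ytrg = t\ysrc$. Hence $v = t\vhatsrc$ witnesses $\min_v \Ltrg(v,\Bhatsrc) = 0$.

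For the negative direction, I will take $d = k = 2$, draw $\xsrc$ from any distribution assigning positive mass to $\{\phi(x_1) + \phi(x_2) > 0\}$ (a standard Gaussian works), and set $\ysrc = \phi(x_1) + \phi(x_2)$. Then $\Bhatsrc = I_2$ and $\vhatsrc = (1,1)^\top$ are legitimate pretrained parameters with $\Lsrc(\vhatsrc,\Bhatsrc) = 0$. Choose $t = -1$, so $\ytrg = -\ysrc$ is strictly negative on an event $E$ of positive probability. For every first-layer $B \in \R^{2\times 2}$, entrywise nonnegativity of $\phi$ combined with the nonnegative entries of $\vhatsrc$ gives $\vhatsrc^\top \phi(B\xtrg) \geq 0$ almost surely. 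On $E$ the pointwise squared error is then at least $\ysrc^2 > 0$, so $\Ltrg(\vhatsrc, B) \geq \E[\ysrc^2 \,\mathbf{1}_E] > 0$, uniformly in $B$.

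The main subtlety is not a technical one but ensuring the failure is \emph{uniform in $B$}: the obstruction must come from freezing $\vhatsrc$, not from a bad specific choice of first layer. The sign/cone argument handles this because it uses only the entrywise nonnegativity of $\vhatsrc$ and of $\phi(\cdot)$, properties that no choice of $B$ can undo. A minor sanity check to include is that the constructed $\Bhatsrc, \vhatsrc$ actually realize zero source loss (immediate from $\phi(x_1)+\phi(x_2) = (1,1)^\top \phi(I_2 x)$), so the counterexample respects the proposition's assumption that pretraining attained the source optimum.
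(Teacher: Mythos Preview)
Your proposal is correct and follows essentially the same approach as the paper: for the positive half you set $v = t\vhatsrc$ (the paper does exactly this), and for the negative half you use the sign obstruction that a nonnegative $\vhatsrc$ combined with the nonnegativity of ReLU forces $\vhatsrc^\top\phi(B\xtrg)\ge 0$ for every $B$, while $t=-1$ makes $\ytrg<0$ with positive probability. Your version is in fact more careful than the paper's---you give a concrete instance ($d=k=2$, $\Bhatsrc=I_2$, $\vhatsrc=(1,1)^\top$, $\ysrc=\phi(x_1)+\phi(x_2)$) and an explicit uniform lower bound $\E[\ysrc^2\mathbf{1}_E]$, whereas the paper merely stipulates that $\vhatsrc$ have positive entries---but the idea is the same, and your observation that the final ``$=0$'' is a typo for ``$>0$'' is correct.
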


Similarly to \cref{prop:input-perturbation}, the last layer can adapt to the label shift by ``reversing'' the multiplication by $t$.
In contrast, when the last layer is frozen and only the first layer is tuned, we may lack expressivity due to the information destroyed by the ReLU activation $\phi(\cdot)$.
For a full statement and proof, see \cref{subsec:app:theory_relu}.

\subsection{Can Surgical Fine-Tuning Outperform Full Fine-Tuning?}
\label{subsec:theory_linear}

In this section, we show that first-layer fine-tuning can provably outperform full fine-tuning when we have an insufficient amount of target data.
We show that this can happen even in tuning two-layer linear networks~\citep{kumar2022fine}, where $\phi$ is the identity map: for all $i$, $\phi(x)_i = x_i$. Our analysis suggests perhaps a more general principle underlying the benefits of surgical fine-tuning over full fine-tuning: by fine-tuning more parameters than necessary, the model can overfit to the small target dataset while forgetting relevant information learned during pre-training.

\paragraph{Pretraining.}
We first start with $\vinit, \Binit$ which are initialized randomly:
${\vinit}_i \sim N(0, \sigma_v^2)$ and ${\Binit}_{ij} \sim N(0, \sigma_B^2)$ for all $i, j$.
We then run gradient descent on $\Lsrc(v, B)$ to obtain $\vhatsrc, \Bhatsrc$, which we assume minimizes the source loss: $\Lsrc(\vhatsrc, \Bhatsrc) = 0$.

\paragraph{Fine-tuning data.}
Suppose we have $n$ target datapoints sampled from $\Ptrg$: $(\xtrg^{(1)}, \ytrg^{(1)}) \ldots$, $(\xtrg^{(n)}, \ytrg^{(n)})$.
The empirical fine-tuning loss is given by: $\Ltrgemp(v, B) = \sum_{i=1}^n l(\fvb(\xtrg^{(i)}), \ytrg^{(i)})$.

\paragraph{Fine-tuning algorithms.} We study two different gradient flows each corresponding to fine-tuning methods: first-layer tuning ($\mathrm{fl}$) and full fine-tuning ($\mathrm{ft}$).
\begin{alignat*}{3}
\partial_t \Bfl(t) &=-\nabla_B \Ltrgemp\left(\vfl(t), \Bfl(t)\right) \quad
&\partial_t \vfl(t)  &=0 \\
\partial_t \Bft(t) &=-\nabla_B \Ltrgemp\left(\vft(t), \Bft(t)\right) \quad
&\partial_t \vft(t)  &=-\nabla_v \Ltrgemp\left(\vft(t), \Bft(t)\right),
\end{alignat*}
with initial conditions $\vfl(0) = \vft(0) = \vhatsrc$ and $\Bfl(0) = \Bft(0) = \Bhatsrc$.
We denote the limit points of these gradient flows as $\vfl^\infty = \lim_{t \rightarrow \infty} \vft(t)$, etc.

The following theorem shows that there exists a shift, where if we have a small target dataset, full fine-tuning does worse than first-layer tuning.

\begin{theorem}
\label{thm:first_layer_better_full_ft}
For any $\delta > 0$, there exists $d, k, \Psrc, \Ptrg, n$ such that with probability at least $1 - \delta$, first-layer tuning gets $0$ loss at convergence, but full fine-tuning gets higher (non-zero) loss throughout the fine-tuning trajectory:
\begin{equation}
    \Ltrg( \vft(t), \Bft(t) ) > \Ltrg( \vfl^\infty, \Bfl^\infty ) = 0 \quad \forall t.
\end{equation}
\end{theorem}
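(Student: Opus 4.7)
The plan is to exhibit a concrete low-dimensional construction and analyze the two gradient flows explicitly. Take $d = 2$ and a rank $k$ (possibly $k = 1$ for simplicity), with $\phi$ the identity. Let $\Psrc$ be a point mass at $(x,y) = (e_1, 1)$, and let $\Ptrg$ place mass on $(e_1,1)$ and $(e_2,1)$. Take the fine-tuning dataset to be a single example $(e_2,1)$, so $n = 1$. The source distribution only constrains the prediction along $e_1$, so pretraining enforces $\vhatsrc^\top \Bhatsrc e_1 = 1$, but the column $\Bhatsrc e_2$ is untouched by source gradient flow and therefore equal to $\Binit e_2$. For any $\delta > 0$, an appropriate choice of $\sigma_v,\sigma_B$ guarantees, with probability at least $1 - \delta$, that $\vhatsrc \neq 0$, $\Bhatsrc e_2 \neq 0$, and $\vhatsrc^\top \Bhatsrc e_2 \neq 1$. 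This is the probabilistic event on which the remainder of the proof will operate.

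\textbf{First-layer tuning.} With $v \equiv \vhatsrc$ fixed, each gradient step on $B$ is a rank-one matrix of the form $\vhatsrc (v^\top B x - y)\, x^\top$. Because the only training point is $(e_2, 1)$, only the second column of $B(t)$ ever moves, and it moves in the direction $\vhatsrc$. Hence $\Bfl^\infty = \Bhatsrc + \alpha\, \vhatsrc e_2^\top$ for the unique $\alpha$ with $\|\vhatsrc\|^2 \alpha = 1 - \vhatsrc^\top \Bhatsrc e_2$. The first column of $B$ is preserved, so $\vhatsrc^\top \Bfl^\infty e_1 = 1$, and by construction $\vhatsrc^\top \Bfl^\infty e_2 = 1$. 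Hence $\Ltrg(\vfl^\infty, \Bfl^\infty) = 0$.

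\textbf{Full fine-tuning.} Here both $v(t)$ and $B(t)$ evolve under the single-example loss $\tfrac12 (v^\top B e_2 - 1)^2$. Two facts drive the argument. First, $\nabla_B \ell \propto v\, e_2^\top$ and $\nabla_v \ell \propto B e_2$, so the first column of $B$ never changes and the motion in $v$ lies entirely in the span of $\{B(t)e_2\}_{t \geq 0}$; a short induction shows $v(t) - \vhatsrc$ remains a scalar multiple of $\vhatsrc$ throughout, so $v(t) = c(t) \vhatsrc$ for some scalar $c(t) \geq 1$ that is non-decreasing (and strictly greater than $1$ for $t > 0$). Second, the gradient flow admits the conservation law $\|v(t)\|^2 - \|B(t)e_2\|^2 = \|\vhatsrc\|^2 - \|\Bhatsrc e_2\|^2$, obtained by comparing $\tfrac{d}{dt}\|v\|^2$ and $\tfrac{d}{dt}\|Be_2\|^2$ directly from the two update equations. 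Solving the resulting two-variable ODE explicitly at convergence yields $c(\infty) > 1$ strictly (this is a closed-form polynomial in the initialization quantities). Since the first column of $B$ is untouched, $v(t)^\top B(t) e_1 = c(t) \cdot \vhatsrc^\top \Bhatsrc e_1 = c(t) > 1$ for every $t > 0$, so the target loss contribution from $e_1$ is $\tfrac12 (c(t) - 1)^2 > 0$. At $t = 0$ the loss is instead strictly positive on the $e_2$ mass, and a short monotonicity argument shows the two error sources cannot simultaneously vanish at any finite time, giving $\Ltrg(\vft(t), \Bft(t)) > 0$ for all $t$.

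\textbf{Main obstacle.} The routine part is recognizing that the first column of $B$ is frozen under both flows and that first-layer tuning trivially succeeds. The technical work is the invariant-and-ODE analysis for full fine-tuning: the pretrained initialization $(\vhatsrc, \Bhatsrc)$ is not balanced in the classical sense, so the standard deep-linear conservation results do not apply off the shelf, and one must rederive the relevant invariant restricted to the $e_2$-subspace and then certify that $c(\infty) > 1$ (and $c(t) > 1$ for all $t > 0$) from the explicit solution. Handling the probability-$1-\delta$ wrapper is straightforward once the deterministic analysis is in place: the only randomness enters through $\Bhatsrc e_2 = \Binit e_2$ and $\vhatsrc$, and the finitely many inequalities needed above all hold on an event of measure $1 - \delta$ for appropriately scaled Gaussian initialization.
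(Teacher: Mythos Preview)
Your plan mirrors the paper's proof closely: both take $k=1$ for the full-fine-tuning analysis, both use (i) the fact that columns of $B$ orthogonal to the training inputs are unchanged by gradient flow, and (ii) the conservation law $\|v(t)\|^2 - \|B(t)e_2\|^2 = \mathrm{const}$ to rule out zero target loss. The paper's conclusion is likewise that zero target loss would force a measure-zero constraint on the random initialization.

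That said, your execution has a few gaps. First, the theorem's setup \emph{samples} the $n$ target points from $\Ptrg$; you cannot simply ``take'' the single example to be $(e_2,1)$. If $\Ptrg$ puts any fixed positive mass on $e_1$, then with $n=1$ you draw $e_1$ with constant probability, breaking the $1-\delta$ guarantee for small $\delta$; and if $\Ptrg$ puts no mass on $e_1$, then full fine-tuning also drives $\Ltrg\to 0$. The paper avoids this by taking the source subspace to have dimension strictly larger than $n$, so that for \emph{any} sample there is always a source direction orthogonal to all training inputs. Your easiest fix is to let $\Ptrg$ put mass $\leq \delta/2$ on $e_1$. Second, the ``short induction'' that $v(t)-\vhatsrc$ stays parallel to $\vhatsrc$ is only vacuously true for $k=1$; for $k>1$ it is false, since $\dot v \propto B(t)e_2$ and $B(0)e_2=\Binit e_2$ is generically not aligned with $\vhatsrc$. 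You should commit to $k=1$ from the outset. Third, the claim that $c(t)\geq 1$ is non-decreasing does not hold in general (e.g.\ $\vhatsrc>0$, $b_2(0)<0$ gives $\dot v(0)<0$), and you do not need it. The clean argument---which is exactly the paper's---is: if $\Ltrg(t)=0$ at some $t$, then $v(t)b_1=1$ forces $v(t)=v(0)$ (since $b_1$ is frozen and $v(0)b_1=1$), whence the conservation law gives $b_2(t)^2=b_2(0)^2$; combined with $v(0)b_2(t)=1$ this forces $v(0)^2 b_2(0)^2=1$, a probability-zero event because $b_2(0)=(\Binit)_{12}$ is Gaussian and independent of $\vhatsrc$.
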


Intuitively, if $\Pood$ contains a few additional directions in the input that are not present in $\Pid$, then first layer-tuning can quickly learn those new directions.
Full fine-tuning changes both the head $v$ and feature extractor $B$ to fit these new directions---however, because the head $v$ has changed it may be incompatible with $B$ in some directions not seen in the finite training set, thus ``forgetting'' some knowledge present in the source data.
The full proof is in \cref{subsec:app:theory_linear}.

\subsection{Surgical Fine-Tuning on Synthetic Distribution Shifts}
\label{subsec:noise-exps}
To better illustrate how specific subsets of parameters are better suited depending on the distribution shift, we model distribution shifts by adding noise to individual blocks of layers. More specifically, we initialize with a ResNet-26 model pretrained on the CIFAR-10 \citep{krizhevsky2009learning} training dataset. We then add noise to each of the three blocks or the last layer, simulating distribution shifts localized to those parameters, and then tune each of the different blocks of the network while freezing all other parameters on the CIFAR-10 test dataset.

\begin{figure}
\vspace{-65pt}
\centering\includegraphics[width=0.9\linewidth]{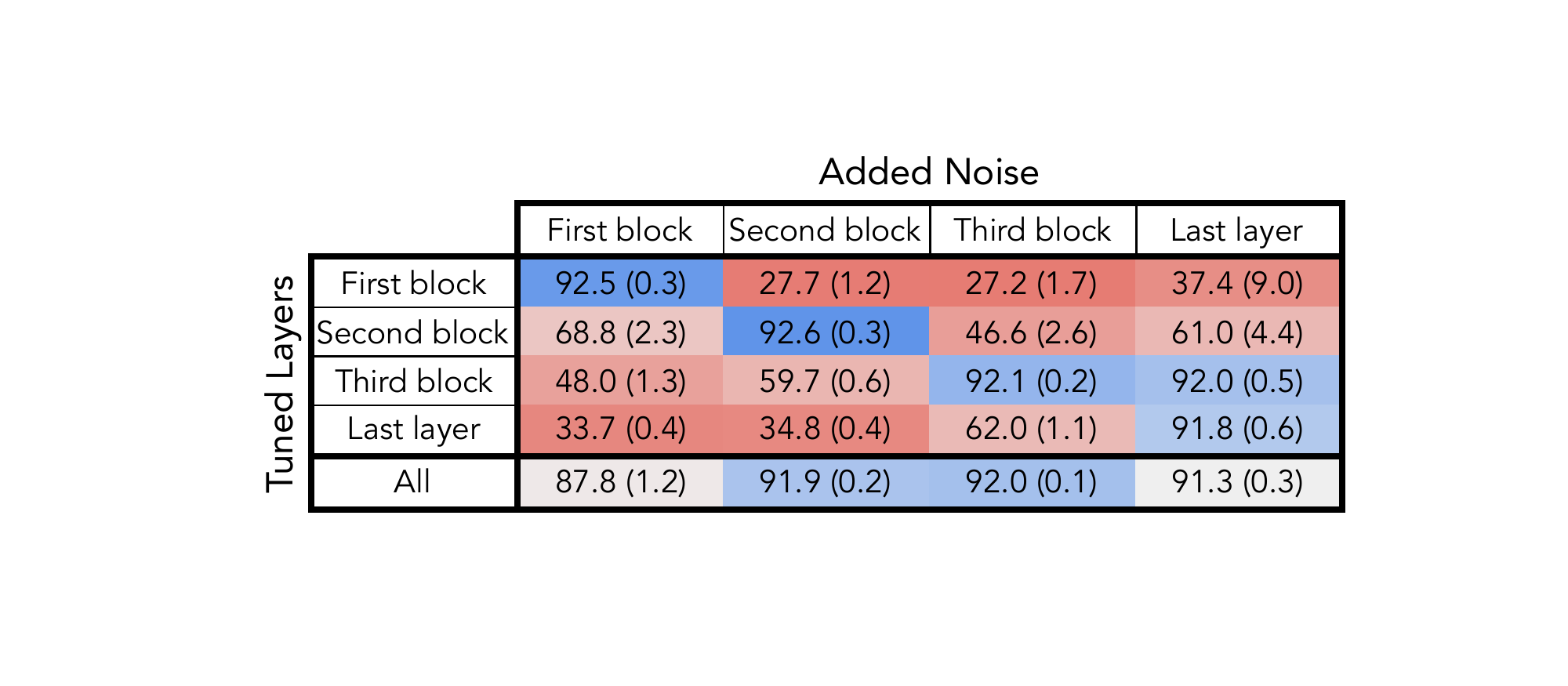}
\vspace{-1.3cm}
\caption{%
\footnotesize In this setting, we construct distribution shifts such that a particular block of parameters is substantially more suited for adaptation. We find that tuning only the subset of parameters that is responsible for the shift performs better than tuning any other block of parameters or all parameters. Darker blue indicates higher accuracy while darker red indicates lower accuracy.
}
\label{fig:noise}
\end{figure}

In Figure~\ref{fig:noise}, we find that only tuning the subset of parameters that is responsible for the shift performs better than tuning any other subset of parameters and even outperforms tuning all layers, indicating that when tuning the parameters responsible for the shift, \emph{tuning other parameters may actually hurt performance}. 

\section{Automatically Selecting Which Layers to Tune}
\label{sec:experiments}

In this section, we investigate three criteria for automatically finding an adequate subset of layers to perform surgical fine-tuning on. 
We evaluate their fine-tuning performance versus full fine-tuning and another prior method on the $7$ real-data domains introduced in \cref{subsec:exp_surgical}. We also analyze performance on the synthetic distribution shifts introduced in \cref{subsec:noise-exps}.

\subsection{Criteria for Selecting Layers}
\label{subsec:criteria}

We consider two criteria for automatically choosing which layers to freeze.
We compute each criterion for each tensor based on its flattened gradient $g \in \Real^D$ or parameters $\theta \in \Real^D$.
We treat weights and biases as two separate tensors.

\textbf{Relative Gradient Norm (Auto-RGN)}. 
We measure the ratio of gradient norm to parameter norm 
\begin{equation}
\textrm{RGN} = \frac{||g||_2}{||\theta||_2},
\end{equation}
and select layers that have relatively larger gradients.
Intuitively, our hypothesis is that layers with large gradient magnitudes may carry more information about the target task than others and can therefore be more useful.
At each epoch, we normalize the per-tensor RGN values to lie between $0$ and $1$ and then multiply the base learning rate to obtain a per-tensor learning rate.
Using this criterion for fine-tuning requires no additional hyperparameters over tuning all layers and only one fine-tuning run. 

\textbf{Signal-to-Noise Ratio (Auto-SNR).}
We measure the signal-to-noise ratio (SNR) of each gradient across different inputs.
Consider a minibatch of $N$ inputs; we denote the $j$-th gradient value on the $i$-th input as $g_{ij} \in \Real$.
This criterion is defined as 
\begin{equation}
\textrm{SNR} 
= \E_i \left[\frac{\mathrm{Avg}_j(g_{ij})^2}{\mathrm{Var}_j(g_{ij})}\right]
\propto \E_i \left[\frac{(\sum_j g_{ij})^2}{\sum_j (g_{ij})^2} \right].
\end{equation}
Intuitively, SNR measures how noisy the gradient of each layer is and thus how much it may contribute to distorting the function learned during pre-training.
This criterion has been shown to be useful for early stopping \citep{mahsereci2017early}. 
During fine-tuning, we normalize the SNR for each tensor between $0$ and $1$ and then freeze all layers that have SNR under a threshold that is tuned as an additional hyperparameter. 

\textbf{Cross-Val.}
As a point of comparison, we evaluate surgical fine-tuning with full cross-validation.
After running surgical fine-tuning for all blocks, we select the best block based on a held-out validation set from the target distribution. 
While reliably effective, this method requires as many fine-tuning runs as there are blocks inside the network.

We additionally compare the three criteria for layer selection above to existing methods for regularizing fine-tuning.
We consider two variations of gradual unfreezing: \textbf{Gradual Unfreeze (First $\rightarrow$ Last)} and \textbf{Gradual Unfreeze (Last $\rightarrow$ First)}~\citep{howard2018universal, romero2020targeted, kumar2022fine}, in addition to \textbf{$\mathbf{L_1}$ Regularize}~\citep{xuhong2018explicit}.
These methods are similar in spirit to surgical fine-tuning in that they aim to minimize changes to parameters.
We additionally experimented with regularizing the $L_2$ norm, but found that $L_1$ consistently performs better.

\subsection{Results on Real World Datasets}

\begin{table*}[t]
\vspace{-20pt}
\center \resizebox{\textwidth}{!}{%
\renewcommand{\arraystretch}{1.2}
\begin{tabular}{c|cc|cc|ccc|c}
\toprule
\multirow{2}{*}{Method} &
  \multicolumn{2}{c|}{Input-level Shifts} &
  \multicolumn{2}{c|}{Feature-level Shifts} &
  \multicolumn{3}{c|}{Output-level Shifts} &
  \multirow{2}{*}{\shortstack{Avg \\ Rank}} \\
  & CIFAR-C          & IN-C          & Living-17           & Entity-30  & CIFAR-F & Waterbirds & CelebA     &      \\ 
\midrule
No Adaptation & 52.6 (0)          & 18.2 (0)         & 80.7 (1.8)          & 58.6 (1.1) & 0 (0)         & 31.7 (0.3) & 27.8 (1.9) & -          \\
\midrule
Cross-Val &
  \underline{82.8 (0.6)} &
  \underline{51.6 (0.1)} &
  93.2 (0.3) &
  \underline{81.2 (0.6)} &
  \underline{93.8 (0.1)} &
  \underline{89.9 (1.2)} &
  \underline{86.2 (0.8)} &
  - \\
\midrule
Full Fine-Tuning (All) & 79.9 (0.7)          & 50.7 (0.1)          & 92.8 (0.7)          & 79.3 (0.6) & 85.9 (0.4)   & \textbf{88.0 (1.2)} & 82.2 (1.3)  & 2.71          \\
Gradual Unfreeze (First $\rightarrow$ Last)        & 80.5 (0.7)       & 50.7 (0.1)         & 90.1 (0.1)          & 69.9 (3.0) & 81.9 (0.6)    & 87.5 (0.2) & 71.9 (0.7) & 4.71          \\
Gradual Unfreeze (Last $\rightarrow$ First)       & 78.8 (0.8)       & 49.8 (0.2)        & 90.6 (0.2)         & 77.8 (0.5) & 84.3 (0.9)   & 87.8 (0.1) & 78.5 (2.9) & 4.0         \\
$L_1$ Regularize \citep{xuhong2018explicit}        & \textbf{81.7 (0.6)}        & 48.8 (0.3)          & 93.4 (0.5)          & 78.4 (0.1) & 84.2 (1.2)    & 87.6 (1.9) & \textbf{82.6 (1.8)} & 3.28          \\
Auto-SNR & 80.9 (0.7) & 49.9 (0.2) & \textbf{93.5 (0.2)} & 77.3 (0.3) & 17.3 (0.7) & 86.3 (0.7) & 78.5 (1.8)  & 4.14 \\
Auto-RGN & 81.4 (0.6)          & \textbf{51.2 (0.2)} & \textbf{93.5 (0.3)} & \textbf{80.6 (1.2)} & \textbf{87.7 (2.8)}   & \textbf{88.0 (0.7)} & 82.2 (2.7)  & \textbf{1.29} \\
\bottomrule
\end{tabular}
}
\caption{
\footnotesize We report the average accuracy and standard error achieved on the target distribution on 7 real-data tasks. Cross-Val, which requires as a surgical fine-tuning run for each block, performs the best, but we find that Auto-RGN performs the best out of all methods that require only 1 fine-tuning run, outperforming Full Fine-tuning, Gradual Unfreezing, $L_1$ Regularize, and Auto-SNR. The best overall method for each shift is underlined, and the best among methods that use 1 fine-tuning run is bolded.}
\label{tab:auto-results}
\end{table*}

In \cref{tab:auto-results}, we compare Cross-Val with 6 methods (Full Fine-Tuning, Gradual Unfreeze (First $\rightarrow$ Last), Gradual Unfreeze (Last $\rightarrow$ First), $L_1$ Regularize, Auto-SNR, and Auto-RGN) that require only 1 fine-tuning run. We find that auto-tuning with relative grad norm (Auto-RGN) matches or outperforms fine-tuning all parameters on all domains and is the most competitive method that requires only 1 fine-tuning run although it does not quite match the performance of Cross-Val. We find that Cross-Val corresponds in performance to the best surgical fine-tuning result for each dataset, which is expected, as the validation and test sets are both held-out subsets of the same target distribution. Auto-SNR struggles to extract the most effective layers for tuning and hence does worse on most shifts than All and Auto-RGN. 
While Gradual Unfreeze fails to consistently outperform full fine-tuning, the directionality of results is consistent with surgical fine-tuning: unfreezing first layers is best in input-level shifts and unfreezing last layers is best in output-level shifts.
$L_1$ Regularize performs slightly better than fine-tuning all, but performs worse than Auto-RGN on all datasets except CIFAR-C. All methods outperform no adaptation.

\subsection{Automatic Selective Fine-Tuning in Synthetic Distribution Shifts}

\begin{figure}
\begin{minipage}{0.38\linewidth}
\centering 
\resizebox{\textwidth}{!}{%
\renewcommand{\arraystretch}{1.2}
\begin{tabular}{c|cc}
\toprule
Tuned Layers  & Full Fine-Tuning & Auto-RGN \\
\midrule
Block 1     & 87.8 (1.2) & \textbf{92.0 (0.4)} \\
Block 2     & 91.9 (0.2) & \textbf{92.7 (0.4)} \\
Block 3     & 92.0 (0.1) & \textbf{92.5 (0.4)} \\
Last Layer  & 91.3 (0.3) & \textbf{92.8 (0.3)} \\
\bottomrule
\end{tabular}
}
\captionof{table}{\footnotesize Automatically choosing which layers to tune using the relative gradient norms (RGN) of each layer outperforms full fine-tuning on distribution shifts constructed by adding noise to different blocks of layers.}
\vspace{-5mm}
\label{tab:noise-auto}
\end{minipage}
\hfill
\begin{minipage}{0.58\linewidth}
\centering\includegraphics[width=\linewidth]{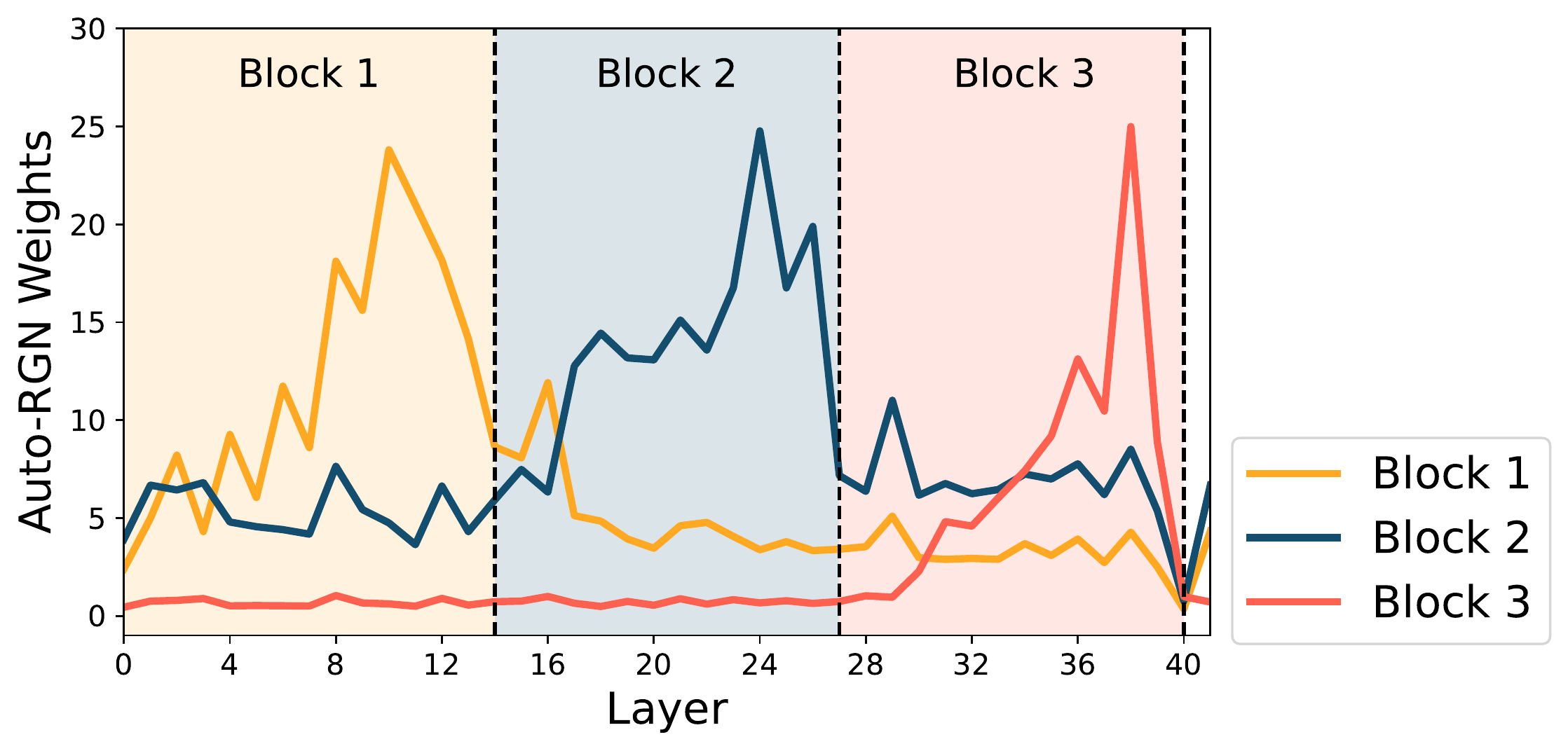}
\vspace{-20pt}
\caption{%
\footnotesize 
Auto-RGN consistently gives higher weights for the layers in the block responsible for the distribution shift than for the other layers.}
\vspace{-5mm}
\label{fig:noise-auto}
\end{minipage}
\end{figure}
As Auto-RGN is the best performing method that requires only one fine-tuning run, and in particular outperforms fine-tuning all layers, we further analyze what layers Auto-RGN chooses to fine-tune and see to what extent they correlate with our experiments in \cref{sec:surgical_ft}.
To do so, we evaluate Auto-RGN on the synthetic distribution shifts introduced in \cref{subsec:noise-exps}, where we model distribution shifts by adding noise to blocks of parameters, and plot the weights that Auto-RGN gives to the layers. 

We find that Auto-RGN is able to ascertain which parameters may be responsible for the shift and weight the learning of those parameters to be higher than the others, resulting in an informative signal that matches the performance of tuning only the noisy subset of parameters and outperforms full fine-tuning, as seen in \cref{tab:noise-auto}. 
\cref{fig:noise-auto} shows the accumulated weights given by Auto-RGN over the course of training for each layer, colored by block. The weights for the layers responsible for the distribution shifts are higher than the weights for the other layers.

\section{Related Work}

\textbf{Parameter freezing.}
Freezing parameters to preserve previously learned information has been shown to be an effective strategy in a diverse set of domains: domain adaptation \citep{sener2016learning, NIPS2016_ac627ab1}, early stopping \citep{mahsereci2017early}, generative models \citep{mo2020freeze}, and gradient-based meta-learning \citep{zintgraf2019fast,raghu2019rapid,triantafillou2021learning},
A highly effective approach to fast adaptation of large language models is prompt tuning \citep{li2021prefix,hambardzumyan-etal-2021-warp,lester2021power, wei2021finetuned}, which can similarly be seen as an extreme special case of freezing where we only fine-tune the inputs to the neural network. 
Our surgical fine-tuning framework contains many such previous works as special cases, and our experiments highlight the value of carefully choosing the subset of parameters to freeze.

\textbf{Transfer learning.}
Prior works in transfer learning have studied how fine-tuning may be used to adapt pretrained features to a target distribution \citep{oquab2014learning,yosinski2014transferable,sharif2014cnn}. 
To preserve information obtained during pre-training, many works propose methods of regularizing the fine-tuning process \citep{zhang2020side,xuhong2018explicit,lee2019mixout,jiang2019smart,Li2020Rethinking,aghajanyan2020better,gouk2021distancebased,shen2021partial, KARANI2021101907}.
In particular, many works show that freezing some parameters in the pre-trained model can reduce overfitting during fine-tuning \citep{kirkpatrick2017overcoming,lee2019would,guo2019spottune,ramasesh2020anatomy,liu2021autofreeze, royer2020flexible, eastwood2021source, evci2022head2toe, eastwood2022unit, cohen2022my, touvron2022three}, and we build on such observations.
Module criticality~\citep{zhang2019all,chatterji2019intriguing,neyshabur2020being}, which independently examines each layers' loss surface, is also closely related to our analysis.
In contrast to existing works, we make the counterintuitive observation that freezing the later layers, or equivalently performing surgical fine-tuning on the early layers, can perform best in some settings.
Furthermore, we study the relationship between the best subset of layers to tune and the nature of the distribution shift between the source and target distributions.

\textbf{Distribution shifts.}
Many existing works have studied adaptation and robustness to various distribution shifts \citep{tzeng2014deep,byrd2019effect,hendrycks2019using, arjovsky2019invariant,salman2020adversarially,liu2021just,wiles2021fine,andreassen2021evolution, miller2021accuracy,creager2021environment,lee2022diversify,kumar2022fine}. 
Such works typically frame robustness to distribution shift as a zero-shot generalization problem, where the model is trained on source and evaluated on target. 
We consider a different problem setting where the model is allowed to adapt to some labeled target data available. 
Some recent works have proposed methods for model adaptation at test time \citep{sun2020test, varsavsky2020test, iwasawa2021test, wang2020tent, zhang2021memo,zhang2021adaptive,gandelsman2022test}.
Recent works \citep{rosenfeld2022domain,kirichenko2022last} study a problem setting close to ours, showing that fine-tuning the last layer is sufficient for adapting to datasets with a spuriously correlated attribute.
Our experiments in \cref{sec:surgical_ft} confirm these results, and we further evaluate on a broader set of distribution shifts including image corruptions and shifts at the level of intermediate features.
We find that fine-tuning different subsets of layers performs best for different types of distribution shifts, and also present theoretical analysis on the relationship between surgical fine-tuning and the type of distribution shift.

\section{Discussion} %

In this paper, we empirically find that when fine-tuning on a new target distribution, it is often best to perform surgical fine-tuning, i.e. to adapt only a small contiguous subset of parameters. 
More importantly, which subset is more effective to tune depends on the type of distribution shift. For example, on input-level shifts like image corruption, only tuning earlier layers can outperform fine-tuning all or only later layers.
These results support our intuition from the independent causal mechanisms (ICM) principle: many distribution shifts can be explained by a shift in one module of the prediction mechanism and can thus be adapted to by tuning
only a small subset of the network. Our empirical findings are supported by theoretical results, which show by construction that first-layer tuning may outperform full fine-tuning in an idealized two-layer neural network setting.

Additionally, manually choosing which layers to freeze with the framework of surgical fine-tuning requires more fine-tuning runs than fine-tuning all layers, so we analyze two criteria for automatically selecting which layers to tune.
While Auto-RGN consistently improves over full fine-tuning, its performance does not match the best surgical fine-tuning approach.
Future work may close this gap by investigating more effective criteria for automatic selection.
More generally, a potentially fruitful direction for future work is in better understanding when a distribution shift would prefer a certain layer, potentially shedding light on the nature of different distribution shifts.

\section*{Acknowledgments}
We thank Suriya Gunasekar, Sebastien Bubeck, Ruoqi Shen, and Jeff Z. HaoChen, Pavel Izmailov, Elan Rosenfeld, and members of the IRIS and RAIL labs for helpful discussions on this project.
This work was supported by KFAS, NSF, Google, Juniper Networks, Stanford HAI, and Open Philanthropy, and an NSF graduate research fellowship.

\bibliographystyle{iclr2023_conference}
\bibliography{master}

\newpage
\appendix
\setcounter{lemma}{0}
\section{Appendix}
\label{sec:appendix}

\setcounter{proposition}{0}
\subsection{Proofs for Section~\ref{subsec:theory_relu}}
\label{subsec:app:theory_relu}

\begin{proposition}
\label{prop:input-perturbation}
For all $A, \Psrc, \Ptrg$ with $\xtrg = A\xsrc$ for invertible $A$ and $\ytrg = \ysrc$, there exists a first-layer $B$ that can minimize the target loss: $\min_B \Ltrg(\vhatsrc, B) = 0$.
However, changing the last layer may not be sufficient: there exists such $A, \Psrc, \Ptrg$ such that the target loss is non-zero for any choice of last layer $v$: for all $i$, $\min_v \Ltrg(v, \Bhatsrc) > 0$.
\end{proposition}

\begin{proof}
Let $\Bhatsrc, \vhatsrc$ be minimum loss solutions so that $\ysrc = \vhatsrc \phi(\Bhatsrc \xsrc)$ for all $\xsrc, \ysrc$.
Denoting $B = \Bhatsrc A^{-1}$, we have for all $\xtrg$
\begin{align}
\vhatsrc \phi(B\xtrg) = \vhatsrc \phi(\Bhatsrc A^{-1}A\xsrc) = \vhatsrc \phi(\Bhatsrc \xsrc) = \ysrc = \ytrg.
\end{align}
Therefore, this pair of parameters $v, B$ achieves $\Ltrg(\vhatsrc, B) = 0$.

We construct a counterexample showing the impossibility of last-layer tuning as follows. 
Recall that $\phi(\cdot)$ is the elementwise ReLU function.
Let $A = -I$, an invertible diagonal matrix with all entries $-1$. 
Let the source distribution be so that $\Bhatsrc\xsrc$ has only positive entries for all $\xsrc$ in its support.
Then for any $v$, we have $ v\phi(\Bhatsrc\xtrg) = v\phi(-\Bhatsrc\xsrc) =0$, so the expected loss is positive.
Therefore, $\min_v \Ltrg(v, \Bhatsrc) > 0$.
\end{proof}

\begin{proposition}
\label{prop:label-perturbation}
For all $t, \Psrc, \Ptrg$ with $\xtrg = \xsrc$, and $\ytrg = t \ysrc$, there exists a last-layer $v$ that can minimize the target loss: $\min_v \Ltrg(v, \Bhatsrc) = 0$. However, changing the first layer may not be sufficient---there exists such $t, \Psrc, \Ptrg$ such that the target loss is non-zero for any choice of first layer $B$: $\min_B \Ltrg(\vhatsrc, B) > 0$
\end{proposition}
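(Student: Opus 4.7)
The plan is to handle the two claims in turn, one easy and one requiring a small construction. For the first claim, that changing only the last layer suffices, I would exploit the linearity of $f_{v,B}$ in $v$. Since $\xtrg = \xsrc$ and the pretraining condition gives $\ysrc = \vhatsrc^\top \phi(\Bhatsrc \xsrc)$ almost surely, taking $v = t\,\vhatsrc$ yields
\[
v^\top \phi(\Bhatsrc \xtrg) \;=\; t\,\vhatsrc^\top \phi(\Bhatsrc \xsrc) \;=\; t\,\ysrc \;=\; \ytrg
\]
almost surely, so $\Ltrg(t\vhatsrc, \Bhatsrc) = 0$. This is a one-line check that does not rely on $\phi$ being ReLU; it would work for any elementwise activation.

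For the second claim, that first-layer tuning cannot always close the gap (reading the stated $= 0$ as the intended $> 0$, consistent with the prose), I would construct a minimal scalar counterexample that weaponizes the non-negativity of the ReLU. Take $d = k = 1$, let $\xsrc \sim \mathrm{Uniform}[1,2]$, and set $\ysrc = \xsrc$. The parameters $\vsrc = \Bsrc = 1$ produce $f_{\vsrc,\Bsrc}(x) = \phi(x) = x$ on the support of $\Psrc$, so they achieve $\Lsrc(\vsrc,\Bsrc) = 0$ and we may take $\vhatsrc = \Bhatsrc = 1$. Choose $t = -1$, so that $\ytrg = -\xsrc \in [-2,-1]$ almost surely. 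With $v$ frozen at $\vhatsrc = 1$, the model output $\phi(B\xtrg)$ is non-negative regardless of $B$, while the target is strictly negative; hence
\[
\bigl(\phi(B\xtrg) - \ytrg\bigr)^2 \;\geq\; \ytrg^2 \qquad \text{for every } B,
\]
which yields $\Ltrg(\vhatsrc, B) \geq \E[\ytrg^2] = \E[\xsrc^2] = 7/3 > 0$ for every $B$, establishing $\min_B \Ltrg(\vhatsrc, B) > 0$.

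The main obstacle, and essentially the only one, is making the counterexample consistent with the pretraining assumption $\Lsrc(\vhatsrc, \Bhatsrc) = 0$: I need a legitimate source-optimal solution with $\vhatsrc > 0$, since positivity of $\vhatsrc$ is exactly what pairs with the ReLU non-negativity to block any negative-target fit. Restricting the support of $\xsrc$ to a positive interval makes $\phi$ act as the identity on the pre-activations, which simultaneously legitimizes the scalar identity as a source-optimal solution and locks in the favorable sign of $\vhatsrc$. Everything else reduces to the direct sign/variance computation above.
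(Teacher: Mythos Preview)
Your proposal is correct and follows essentially the same approach as the paper: both parts use $v = t\,\vhatsrc$ for the first claim, and for the counterexample both take $t=-1$ with a source distribution arranged so that $\vhatsrc$ is positive and $\ysrc>0$, then invoke the non-negativity of the ReLU output to preclude hitting the negative target. Your version is simply a more concrete instantiation (explicit $d=k=1$, uniform support, and a quantitative lower bound), and you correctly flag the $=0$ typo in the statement.
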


\begin{proof}
Let $\Bhatsrc, \vhatsrc$ be minimum loss solutions so that $\ysrc = \vhatsrc \phi(\Bhatsrc \xsrc)$ for all $\xsrc, \ysrc$. 
Let $v = t\vhatsrc$. Then for all $\xtrg$, 
\begin{align}
v\phi(\Bhatsrc\xtrg) = t\vhatsrc\phi(\Bhatsrc\xsrc) = t\ysrc = \ytrg.
\end{align}
Therefore, this pair of parameters $v, \Bhatsrc$ achieves $\Ltrg(v, \Bhatsrc) = 0$.

We next construct a counterexample showing that tuning only the first layer may not be sufficient. 
Recall that $\phi(\cdot)$ is the elementwise ReLU function. Let $t = -1$. 
Let the source distribution be so that both $B\xsrc$ and $\vhatsrc$ consist only of positive entries for all $\xsrc$ in its support.
For any $B$, 
both $\vhatsrc$ and $\phi(B\xtrg)$ consist only of positive entries, so $\vhatsrc\phi(B\xtrg)$ cannot express $\ytrg = -\ysrc < 0$.
so the expected loss is positive for any $B$.
Therefore, $\min_B \Ltrg(\vhatsrc, B) > 0$.
\end{proof}

\subsection{Proof of Theorem~\ref{thm:first_layer_better_full_ft} in Section~\ref{subsec:theory_linear}}
\label{subsec:app:theory_linear}

\newcommand{\dsrc}{d_\mathrm{src}}
\newcommand{\Ssrc}{S_\mathrm{src}}
\newcommand{\Pzsrc}{P^{(z)}_\mathrm{src}}
\newcommand{\Xsrc}{X_\mathrm{src}}
\newcommand{\dorth}{d_\mathrm{orth}}
\newcommand{\Sorth}{S_\mathrm{orth}}
\newcommand{\Pzorth}{P^{(z)}_\mathrm{orth}}
\newcommand{\Xorth}{X_\mathrm{orth}}
\newcommand{\vstar}{v_*}
\newcommand{\Bstar}{B_*}
\newcommand{\wstar}{w_*}
\newcommand{\vftt}{\vft(t)}
\newcommand{\Bftt}{\Bft(t)}
\newcommand{\bftt}{b(t)}
\newcommand{\bstar}{b_*}
\newcommand{\binit}{\hat{b}_{\mathsf{src}}}

We introduce some additional setup, prove two key lemmas which bound the loss of first-layer tuning and full fine-tuning respectively, and then this immediately implies Theorem~\ref{thm:first_layer_better_full_ft}.

\paragraph{Defining the label distribution.}
We assume that $P(y \mid x)$ is the same in both the source and the target. 
That is, we assume there exists some $\vstar, \Bstar$ such that $y = \vstar^\top \Bstar x$ for both $\Psrc$ and $\Ptrg$.
Let $\wstar = \Bstar^\top \vstar$.

\paragraph{Defining the covariate distributions.} 
Now, we define the distribution over the inputs $x$. Let the source distribution $\Psrc$ have density on a $\dsrc$-dimensional subspace, where $\dsrc < d$ (recall that the input dimension is $d$).
Formally, this means that there exists some $\Ssrc \in \R^{d \times \dsrc}$ with linearly independent columns, and some distribution $\Pzsrc$ with density on $\R^{\dsrc}$, such that $\Psrc$ has the distribution $\Ssrc z$ where $z \sim \Pzsrc$. 

We assume a non-degeneracy condition---that the optimal model $\wstar$ does not map (non-zero) source examples to $0$: for all source examples $x \in \colspace(\Ssrc)$, if $x \neq 0$, then $\wstar^\top x \neq 0$. If $\wstar$ were random or had some noise then this would hold with probability $1$.

Suppose we have an orthogonal distribution $\Porth$ which has density on a $\dorth$ dimensional subspace. Formally, this means that there exists some $\Sorth \in \R^{d \times \dorth}$ with linearly independent columns, and some distribution $\Pzorth$ with density on $\R^{\dorth}$, such that $\Porth$ has the distribution $\Sorth z$ where $z \sim \Pzorth$. 
We assume that the support of $\Porth$ and $\Psrc$ are orthogonal---that is, the columns of $\Sorth$ and the columns of $\Ssrc$ are all orthogonal.

The target distribution $\Ptrg$ is an equal mixture of the source distribution $\Psrc$ and the orthogonal distribution $\Porth$:
\begin{equation}
\Ptrg = \frac{1}{2} (\Psrc + \Porth)
\end{equation}
This means that to sample from $\Ptrg$, with probability 0.5 we pick a sample from $\Psrc$ and with probability 0.5 we pick a sample from $\Porth$.

\paragraph{First layer tuning gets 0 target loss.} We first show that first-layer tuning gets $0$ loss on the target distribution:

\begin{lemma}
\label{thm:first_layer_zero_loss}
For any $\delta > 0$, suppose $n > 10 \dorth \log{\frac{2}{\delta}}$. Then with probability at least $1 - \delta$, first-layer tuning gets $0$ loss at convergence:
\begin{equation}
\Ltrg( \vfl^\infty, \Bfl^\infty ) = 0
\end{equation}
\end{lemma}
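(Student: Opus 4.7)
The plan is to reduce the first-layer gradient flow to a gradient flow on a single vector $b(t) \in \R^d$ solving a linear least-squares problem, characterize its limit point, and then combine the orthogonality between $\colspace(\Ssrc)$ and $\colspace(\Sorth)$ with the pretraining identity to show that this limit agrees with $\wstar$ on the entire support of $\Ptrg$. With $\vfl(t) \equiv \vhatsrc$ and $\phi$ the identity, the prediction $\vhatsrc^\top B x$ depends on $B$ only through $b \defeq B^\top \vhatsrc$. The gradient $\nabla_B \Ltrgemp(\vhatsrc, B) = 2\sum_i (\vhatsrc^\top B \xtrg^{(i)} - \ytrg^{(i)})\, \vhatsrc (\xtrg^{(i)})^\top$ has its first factor lying in $\mathrm{span}(\vhatsrc)$, so the induced dynamics on $b(t)$ will be $\partial_t b(t) = -2\|\vhatsrc\|^2 \sum_i (b(t)^\top \xtrg^{(i)} - \ytrg^{(i)})\, \xtrg^{(i)}$, i.e.\ a time-rescaled gradient flow on $\sum_i (b^\top \xtrg^{(i)} - \ytrg^{(i)})^2$ initialized at $\binit \defeq \Bhatsrc^\top \vhatsrc$. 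Letting $X$ stack the target inputs and $R = \rowspace(X)$, and observing that $\wstar$ is a solution of $Xb=y$ since $\ytrg^{(i)} = \wstar^\top \xtrg^{(i)}$, the standard characterization of gradient flow on underdetermined linear least squares yields $b^\infty = P_{R^\perp}(\binit) + P_R(\wstar)$.

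Next I would establish a high-probability event and then verify zero loss pointwise. Each target input is independently drawn from $\Porth$ with probability $1/2$, so a Chernoff bound gives that for $n > 10\dorth \log(2/\delta)$ at least $\dorth$ target inputs lie in $\colspace(\Sorth)$ with probability at least $1-\delta$. Conditioned on this event, absolute continuity of $\Pzorth$ ensures that any $\dorth$ such samples are almost surely linearly independent and hence span $\colspace(\Sorth)$, giving $\colspace(\Sorth) \subseteq R$. The pretraining assumption $\Lsrc(\vhatsrc,\Bhatsrc)=0$ gives $\binit^\top x = \wstar^\top x$ for all $x \in \colspace(\Ssrc)$, equivalently $\binit - \wstar \in \colspace(\Ssrc)^\perp$. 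I would then check $b^\infty \cdot x = \wstar \cdot x$ for $x \in \mathrm{supp}(\Ptrg) \subseteq \colspace(\Ssrc) \cup \colspace(\Sorth)$ by cases: if $x \in \colspace(\Sorth) \subseteq R$ then $P_{R^\perp}(x) = 0$, so $b^\infty \cdot x = \wstar \cdot P_R(x) = \wstar \cdot x$; if $x \in \colspace(\Ssrc)$, then because $R \supseteq \colspace(\Sorth)$ and $\colspace(\Ssrc) \perp \colspace(\Sorth)$, the projection $P_R(x)$ stays inside $\colspace(\Ssrc)$, so $P_{R^\perp}(x) = x - P_R(x) \in \colspace(\Ssrc)$, and combined with $\binit - \wstar \perp \colspace(\Ssrc)$ this gives $(b^\infty - \wstar)\cdot x = (\binit - \wstar) \cdot P_{R^\perp}(x) = 0$. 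Thus $b^\infty$ matches $\wstar$ on all of $\mathrm{supp}(\Ptrg)$ and $\Ltrg(\vfl^\infty, \Bfl^\infty) = 0$.

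The main obstacle will be the second case above: the finite target set generally does not span $\colspace(\Ssrc)$, so the least-squares interpolation guarantee does not directly certify zero loss on source-direction inputs. The argument has to leverage the pretraining identity $\binit - \wstar \in \colspace(\Ssrc)^\perp$ together with the fact that the $\colspace(\Sorth)$-component of $R$ is already complete, so that the residual of $\binit$ projects harmlessly inside $\colspace(\Ssrc)$. The Chernoff-style counting itself is routine but needs the right constants to match the hypothesis $n > 10\dorth \log(2/\delta)$; the non-degeneracy condition on $\wstar$ is not used for this lemma but will become important when building on it toward \cref{thm:first_layer_better_full_ft}.
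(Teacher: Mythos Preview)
Your proof is correct and follows essentially the same approach as the paper: both arguments hinge on (i) a Hoeffding/Chernoff count ensuring the orthogonal training samples span $\colspace(\Sorth)$, (ii) the fact that gradient flow leaves the component of the predictor orthogonal to the training-row space unchanged, and (iii) the pretraining identity on $\colspace(\Ssrc)$. The only cosmetic difference is that you reduce to the scalar-head variable $b = B^\top \vhatsrc$ and write the limit explicitly as $b^\infty = P_{R^\perp}(\binit) + P_R(\wstar)$, whereas the paper keeps the full matrix $B$, invokes convexity to get zero empirical loss, and cites an external lemma (Lemma~A.3 of Kumar et al.) for the ``unchanged on $R^\perp$'' step; your version is thus slightly more self-contained but otherwise the same argument.
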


\begin{proof}

We first note that first-layer tuning does not update the head $v$, so we have $\vfl^\infty = \vhatsrc$.

\paragraph{Convex so converges.} Note that the loss $\Ltrgemp$ is convex in $B$. To see this, note that we can write:
\begin{equation}
\Ltrgemp(v, B) = \sum_{i=1}^n (v^\top B \xtrg^{(i)} - \ytrg^{(i)})^2 = (\Tr(\xtrg^{(i)} v^\top B) - \ytrg^{(i)})^2.
\end{equation}
$\Tr(\xtrg^{(i)} v^\top B)$ is a linear function of $B$, so this is simply a least squares regression problem and is convex. This means that gradient flow converges to \emph{a} minimizer of the \emph{train} loss:
\begin{equation}
\Ltrgemp(\vhatsrc, \Bfl^\infty) \leq \Ltrgemp(\vhatsrc, B) \quad \forall B.
\end{equation}
However, since $\vhatsrc \neq 0$, there exists $B$ such that $\Ltrgemp(\vhatsrc, B) = 0$, so since the loss is non-negative, this implies that:
\begin{equation}
\Ltrgemp(\vhatsrc, \Bfl^\infty) = 0.
\end{equation}
\paragraph{Define ID and orthogonal training examples.}
We note that every example $x$ sampled from $\Ptrg$ comes from exactly one of $\Porth$ or $\Psrc$.\footnote{Since the distributions have density on some subspace, $x$ is almost surely non-zero.} We group examples based on which distribution they come from. Let $\Xsrc$ and $\Xorth$ denote the source and orthogonal examples respectively, where $X_i$ denotes 
\begin{equation}
\Xsrc = \{ \xtrg^{(i)} : \xtrg^{(i)} \in \text{colspace}(\Ssrc)\} \quad \Xorth = \{ \xtrg^{(i)} : \xtrg^{(i)} \in \text{colspace}(\Sorth)\}
\end{equation}
\paragraph{Enough to get a basis correct.} Since we are working with linear models, it suffices to get all examples in a basis correct to get the entire subspace correct. Stated formally, if $v^\top B x = \vstar^\top \Bstar x$ and $v^\top B x' = \vstar^\top \Bstar x'$, then the equality holds for any linear combination as well: $v^\top B (\alpha x + \beta x') = \vstar^\top \Bstar (\alpha x + \beta x')$ for all $\alpha, \beta$. So to show that $\Ltrg(\vhatsrc, \Bfl^\infty) = 0$ is suffices to show that $\vhatsrc^\top  \Bfl^\infty x = \vstar^\top \Bstar x$ for some set of $x$ that spans the support of $\Psrc$ and $\Porth$. 

\paragraph{$\Xorth$ spans orthogonal subspace.}
A standard application of Hoeffding gives us that with probability $\geq 1 - \delta / 2$, we have at least $\dorth$ examples from the orthogonal distribution: $| \Xorth | \geq \dorth$. Since $\Porth$ has density on a $\dorth$ dimensional subspace, from e.g., Lemma 3 in~\citet{xie2021innout} these examples will span the orthogonal subspace almost surely: $\mbox{span}(\Xorth) = \mbox{colspace}(\Sorth)$. Intuitively, since $\Porth$ has density we will sample points in different directions. 

\paragraph{Get all examples in orthogonal subspace correct.} 
Since we have $0$ training loss, and $\Xorth$ spans the support of $\Porth$, this means that we get all examples in the orthogonal subspace correct---that is, for all $x$ in the support of $\Porth$, we have:  $\vhatsrc^\top  \Bfl^\infty x = \vstar^\top \Bstar x$.

\paragraph{Get all examples in source subspace correct.}
For the source subspace we will split into two cases. First, we define the region of the source subspace $\mbox{colspace}(\Ssrc)$ that is orthogonal to all source training examples $\Xsrc$:
\begin{equation}
\Xsrc^\perp = \{ x \in \mbox{colspace}(\Ssrc) : \forall x' \in \Xsrc. \; x \perp x' \}.
\end{equation}
Since we have $0$ training loss, we get all examples in the span of the source training examples correct: for all $x \in \mbox{span}(\Xsrc)$ we have: $\vhatsrc^\top  \Bfl^\infty x = \vstar^\top \Bstar x$.

For all $x \in \Xsrc^\perp$, from Lemma A.3 in~\cite{kumar2022fine} we have that $\Bfl^\infty x = \Bft(0) x = \Bhatsrc x$. But this means that $\vhatsrc^\top \Bfl^\infty x = \vhatsrc^\top \Bhatsrc x$. Since we assumed that we pretrained to get $0$ loss on the source we have:  $\vhatsrc^\top \Bhatsrc x = \vstar^\top \Bstar x$.

Combining these two cases, we get all examples in the support of $\Psrc$ correct.

\paragraph{$\Ptrg$ is a mixture of $\Psrc$ and $\Porth$.} Since we get 0 loss on the support of $\Psrc$ and support of $\Porth$, and $\Ptrg$ is a mixture of the two, we get 0 loss on $\Ptrg$ as well:
\begin{equation}
\Ltrg(\vhatsrc, \Bfl^\infty) = 0
\end{equation}
\end{proof}

\begin{lemma}
\label{thm:full_ft_positive_loss}
Suppose the representation dimension is 1-dimensional ($k=1$) and $\dsrc > n$. Then with probability at least $1$, full fine-tuning gets positive (non-zero) loss at all times $t$:
\begin{equation}
\Ltrg( \vftt, \Bftt ) > 0
\end{equation}
\end{lemma}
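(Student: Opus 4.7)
The plan is to show that $\Ltrg(\vftt, \Bftt) = 0$ at any time $t$ forces a single codimension-one identity on the pretrained parameters, which we rule out by the randomness of pretraining initialization. Since $k = 1$, write $\bftt = \Bftt^\top \in \R^d$ so that the model equals $\vftt \cdot \bftt^\top x$. A short computation on the two gradient flows yields the balanced conservation $\vftt^2 - \|\bftt\|^2 = \vhatsrc^2 - \|\binit\|^2$ for all $t$. Moreover, by the same argument as Lemma A.3 of \citet{kumar2022fine}, $\bftt - \binit$ lies in $\mathrm{span}(\{\xtrg^{(i)}\}_{i=1}^n) \subseteq \colspace(\Ssrc) \oplus \colspace(\Sorth)$.

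First I would extract a direction that forces $\vftt = \vhatsrc$. Because $\dsrc > n$, almost surely $\Xsrc$ fails to span $\colspace(\Ssrc)$, so there exists a nonzero $u \in \colspace(\Ssrc)$ orthogonal to every training input (orthogonality to $\Xorth$ is automatic from $\colspace(\Sorth) \perp \colspace(\Ssrc)$). On such $u$, $\bftt^\top u = \binit^\top u$, while the label is $\wstar^\top u = \vhatsrc \binit^\top u$. Non-degeneracy gives $\binit^\top u \neq 0$, so zero target loss at $u$ requires $\vftt = \vhatsrc$.

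Assuming $\Ltrg(\vftt, \Bftt) = 0$ and $\vftt = \vhatsrc$, I would pin down $\bftt$ completely: zero loss on $\mathrm{supp}(\Psrc)$ forces $\Pi_{\Ssrc} \bftt = \Pi_{\Ssrc} \binit$; zero loss on $\mathrm{supp}(\Porth)$ forces $\Pi_{\Sorth} \bftt = \Pi_{\Sorth} \wstar / \vhatsrc$; the component orthogonal to $\colspace(\Ssrc) \oplus \colspace(\Sorth)$ is preserved by the span restriction above. Substituting this reconstruction of $\bftt$ into the conservation law together with $\vftt = \vhatsrc$ collapses to the single identity $\|\Pi_{\Sorth} \wstar\|^2 = \vhatsrc^2 \|\Pi_{\Sorth} \binit\|^2$, which does not depend on $t$.

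The main obstacle is ensuring this identity is genuinely measure-zero in our setup, since $\binit$ is produced by pretraining rather than chosen freely. I would argue that source gradients push $\Binit$ only along $\colspace(\Ssrc)$, which is orthogonal to $\colspace(\Sorth)$, so $\Pi_{\Sorth} \binit$ equals its Gaussian initialization and hence $\|\Pi_{\Sorth} \binit\|^2$ has a density on $\R$. Picking $\wstar$ with $\Pi_{\Sorth} \wstar \neq 0$ then makes the required equality a probability-zero event, ruling out $\Ltrg = 0$ for any $t > 0$. The $t = 0$ case reduces to $\Ltrg(\vhatsrc, \binit) = \tfrac{1}{2} \E_{x \sim \Porth}[(\vhatsrc \binit^\top x - \wstar^\top x)^2]$, which is strictly positive almost surely by the same genericity argument on $\Pi_{\Sorth} \binit$.
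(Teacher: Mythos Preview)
Your proposal is correct and follows essentially the same route as the paper: use an unseen source direction (your $u$, the paper's $x_s$) together with the invariance of $\bftt$ off the training span to force $\vftt=\vhatsrc$, then combine the Du--Kumar balancedness law with zero loss on the source and orthogonal subspaces to collapse everything to the single identity $\|\Pi_{\Sorth}\wstar\|^2=\vhatsrc^{2}\,\|\Pi_{\Sorth}\binit\|^2$, ruled out by the Gaussianity of $\Pi_{\Sorth}\binit=\Pi_{\Sorth}\Binit^\top$. The only cosmetic differences are that the paper organizes the argument as a two-case split on whether $\vftt=\vhatsrc$ (rather than deriving it under the zero-loss hypothesis) and does not separately treat $t=0$ or the component orthogonal to $\colspace(\Ssrc)\oplus\colspace(\Sorth)$; your handling of those points is fine and the measure-zero step goes through because pretraining depends only on $\vinit$ and $\Pi_{\Ssrc}\Binit$, leaving $\|\Pi_{\Sorth}\binit\|^2$ with a density independent of $\vhatsrc$.
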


\begin{proof}
First, we note that since $n < \dsrc$, and we only have $n$ training examples, our training examples do not span the source distribution. That is, there exists some source example $x_s \in \colspace(\Ssrc)$ which is in the support of $\Psrc$, $x_s \neq 0$, which is orthogonal to all the training examples: $x_s \perp \xtrg^{(i)}$ for all $1 \leq i \leq n$. Choose such an $x_s$.

We note that since $k = 1$ (the representation dimension is 1), $\vftt \in \R$ is a scalar, and $\Bftt, \Bstar \in \R^{1 \times d}$ are row vectors. For notational convenience,  let $\bftt = \Bftt^\top$, $\bstar = \Bstar^\top$, and $\binit = \Bhatsrc^\top$, so we have for example $\ytrg^{(i)} = \vstar \bstar^\top \xtrg^{(i)}$.

\paragraph{$\vftt$ cannot change for zero loss.} First, we show that if $\vftt \neq \vhatsrc$ then $\Ltrg( \vftt, \Bftt ) > 0$. Since $x_s$ is orthogonal to the training examples, from Lemma A.3 in~\cite{kumar2022fine}, we have $\bftt^\top x_s = \binit^\top x_s$. Since pretraining gave us 0 loss on the source distribution, we have $\vhatsrc \binit^\top x_s = \vstar \bstar^\top x_s$. Recall that we assumed that $\wstar^\top x_s \neq 0$ if $x_s \neq 0$, which implies that $\binit^\top x_s \neq 0$ since pretraining gets all the source examples right, and the ground truth label for all non-zero source examples is non-zero. But then if $\vftt \neq \vhatsrc$, we have: $\vftt \bftt^\top x_s \neq \vhatsrc \binit^\top x = \vstar \bstar^\top x_s$. Since $\Psrc$ has density, we can construct a small ball $B$ of non-zero probability around $x_s$ such that for all $x \in B$, $\vftt \bftt^\top x \neq \vstar \bstar^\top x$. This implies that  $\Ltrg( \vftt, \Bftt ) > 0$.

\paragraph{$\vftt$ must change for zero loss.} Next, suppose that $\Ltrg( \vftt, \Bftt ) = 0$. We will show $\vftt \neq \vhatsrc$. Suppose for the sake of contradiction that $\vftt = \vhatsrc$.

From Lemma A.4 in \citet{kumar2022fine} (also see Theorem 2.2 in \citet{du2018algorithmic}), we have:
\begin{equation}
\vhatsrc^2 - \binit^\top \binit = \vftt^2 - \bftt^\top \bftt.
\end{equation}
Since $\vhatsrc = \vftt$, this gives us:
\begin{equation}
\label{eqn:b_norm_preserved}
\binit^\top \binit = \bftt^\top \bftt.
\end{equation}

Let $R = \colspace(\Ssrc)$ be the source subspace.
Since $\Psrc$ is a subset of $\Ptrg$, we have that $(\vftt, \bftt)$ also gets $0$ loss on the source distribution, and we have:
\begin{equation}
\Pi_R(\vstar \bstar) = \Pi_R(\vhatsrc \binit) = \Pi_R(\vftt \bftt).
\end{equation}
Since $\vinit = \vftt$, , we have:
\begin{align}
\Pi_R(\vftt \binit) = \Pi_R(\vftt \bftt).
\end{align}
Since $\vftt \neq 0$ (otherwise we would get source examples wrong):
\begin{align}
\label{eqn:b_src_preserved}
\Pi_R(\binit) = \Pi_R(\bftt).
\end{align}
Let $T = \colspace(\Sorth)$ be the orthogonal subspace.
From Equation~\ref{eqn:b_norm_preserved} and Equation~\ref{eqn:b_src_preserved}, we have:
\begin{equation}
\label{eqn:b_norm_preserved_orth}
\| \Pi_T(\binit) \|_2^2 = \| \Pi_T(\bftt) \|_2^2
\end{equation}
But to get $0$ loss on $T$, we have: $\Pi_T(\vftt \bftt) = \Pi_T(\vstar \bstar)$. Which implies:
\begin{equation}
\| \Pi_T(\bftt) \|_2^2 = \frac{\vstar^2}{(\vftt)^2} \| \Pi_T( \bstar) \|_2^2.
\end{equation}
From Equation~\ref{eqn:b_norm_preserved_orth} and since $\vhatsrc = \vftt$, we have:
\begin{equation}
\label{eqn:b_init_star_connection}
\| \Pi_T(\binit) \|_2^2 = \frac{\vstar^2}{(\vftt)^2} \| \Pi_T( \bstar) \|_2^2.
\end{equation}
Recall that the way we got $\binit$ was we initialized $b_0 = \Binit^\top \sim N(0, \sigma_B^2 I_d)$. We then ran gradient descent on the source distribution However, from Lemma A.3 in~\cite{kumar2022fine} this does not change the projection onto components orthogonal to the source distribution. In other words, $\| \Pi_T(\binit) \|_2^2 = \| \Pi_T(b_0) \|_2^2$. However, this is a random variable with density, so  the probability that this is exactly equal to the RHS of Equation~\ref{eqn:b_init_star_connection} which is a fixed number, is 0. This is a contradiction.

\paragraph{Wrap up.} Either ways, if $\vftt = \vhatsrc$ or $\vftt \neq \vhatsrc$ we have: $\Ltrg( \vftt, \Bftt ) > 0$.
\end{proof}

\begin{proof}[Proof of Theorem~\ref{thm:first_layer_better_full_ft}]
The proof follows directly because Lemma~\ref{thm:full_ft_positive_loss} shows that full fine-tuning gets positive (non-zero) loss but Lemma~\ref{thm:first_layer_zero_loss} gets zero loss.
\end{proof}

\subsection{Additional Dataset Details}
\label{sec:app-datasets}

Below, we provide additional information on the 
datasets used in our experiments.
\begin{itemize}[leftmargin=*]
    \item \textbf{CIFAR-10 $\rightarrow$ CIFAR-10-C} \citep{krizhevsky2009learning,hendrycks2019robustness}: The task is to classify images into 10 classes, where the target distribution contains severely corrupted images. We run experiments over 14 of the corruptions (frost, gaussian blur, gaussian noise, glass blur, impulse noise, jpeg compression, motion blur, pixelate, saturate, shot noise, snow, spatter, speckle noise, and zoom blur). For the main experiments, we tune on 1000 images from CIFAR-10-C and evaluate on corrupted images from each of the corruptions. We use the data loading code from \citep{croce2020robustbench}, which has 5 levels of severity, and we evaluate with the most severe level. In our main experiments, we report the accuracies averaged across all corruptions and the average std error for all corruptions.
    \item \textbf{ImageNet $\rightarrow$ ImageNet-C} \citep{deng2009imagenet, hendrycks2019robustness}: The task is to classify images into 1000 classes, where the target distribution contains severely corrupted images. We run experiments over 15 of the corruptions (brightness, contrast, defocus blur, elastic transform, fog, frost, Gaussian noise, glass blur, impulse noise, jpeg compression, motion blur, pixelate, shot noise, snow, zoom blur). For the main experiments, we tune on 5000 images from ImageNet-C, evenly split between classes, giving 5 corrupted images per class and evaluate on corrupted images from each of the corruptions. Similar to CIFAR-10-C, we evaluate with the most severe level. We also report the accuracies averaged across all corruptions and the average std error for all corruptions.
    \item \textbf{Living-17} and \textbf{Entity-30} \citep{santurkar2020breeds}: The task is to classify images into one of 17 animal categories or one of 30 entities. These datasets present subpopulation shifts, in that while the ID and OOD distributions have the same overall classes, they contain different subpopulations of those classes. For Living-17, we tune on 850 images from the target distribution, evenly split between the 17 classes, giving 50 images per class. For Entity-30, we tune on 1500 images from the target distribution, evenly split between the 30 classes, giving 50 images per class.
    \item \textbf{Waterbirds} \citep{sagawa2019distributionally}: The task is to classify images as being a ``waterbird'' or ``landbird''. The label is spuriously correlated with the image background, which is either ``land'' or ``water.'' The source distribution is the training set while the target distribution is a balanced subset with equal amounts of each bird on each background. In the training data, 95 \% of the waterbirds appear on water backgrounds, and 95\% of the landbirds appear on land backgrounds, so the minority groups contain far fewer examples than the majority groups. We tune on 400 images from the target distribution, evenly split between the 4 groups of (bird, background) pairs, giving 100 images per group.
    \item \textbf{CelebA} \citep{sagawa2019distributionally}: The task is to classify the hair color in images as ``blond'' or ``not blond'', and the label is spuriously correlated with the Male attribute. The source distribution is the training set while the target distribution is a balanced subset with equal amounts of each of the four (hair color, gender) groups. We tune on 400 images from the target distribution, evenly split between the 4 groups of (hair color, gender) pairs, giving 100 images per group.
    \item \textbf{Camelyon17} \citep{camelyon}: This dataset is part of the WILDS~\citep{koh2021wilds} datasets and contains roughly 450,000 images in the source distribution (Train) and 84,000 images in the target distribution (OOD test) of size $96 \times 96$. It comprises of medical images collected from 5 hospitals where difference in devices/data-processing between different hospitals produces a natural distribution shift. We pre-train on the 450,000 images of the source distribution and use 100 label-balanced images (50 per class) from the target distribution for fine-tuning. We use another 100 label-balanced target distribution images for tuning hyper-parameters, and report the performance of the fine-tuned model on the rest of the images of the target distribution.
    \item \textbf{FMoW} \citep{fmow}: This is also part of the WILDS~\citep{koh2021wilds} datasets and its source distribution contains 520,000 satellite images of size $224 x 224$ from 5 geographic regions. The task is to classify one of 62 building or land use types. For target distribution, we use Africa test, i.e., the subset of the OOD test data belonging to Africa region, which has roughly 2,500 images. We use 62 label-balanced images from target distribution for fine-tuning and report the accuracy on the rest of the target distribution images.
\end{itemize}

\subsection{Additional Details for Supervised Transfer Learning Experiments}
\label{sec:app-training}

Below, we provide additional details for our experiments on real data, including tuning details. For all datasets and experiments, we \textbf{early stop} according to the best accuracy on a held-out validation subset of the labeled target data. 
\begin{itemize}[leftmargin=*]
    \item \textbf{CIFAR-10 $\rightarrow$ CIFAR-10-C} \citep{krizhevsky2009learning,hendrycks2019robustness} and CIFAR-Flip: We use the Standard pre-trained model from \citep{croce2020robustbench}, which is trained on the source CIFAR-10 distribution. We fine-tune on the labeled target data for 15 total epochs. We tune over the 3 learning rates \{1e-3, 1e-4, 1e-5\} for all methods except last-layer fine-tuning, where we tune over \{1e-1, 1e-2, 1e-3\}, and we use a weight decay of 0.0001 for all methods. 
    \item \textbf{ImageNet $\rightarrow$ ImageNet-C} \citep{deng2009imagenet, hendrycks2019robustness}: We use the Standard pre-trained model from \citep{croce2020robustbench}, which is trained on the source ImageNet distribution. We then fine-tune on the labeled target data for 10 total epochs. We tune over the 3 learning rates \{1e-3, 1e-4, 1e-5\} for all methods, and we use a weight decay of 0.0001 for all methods. 
    \item \textbf{Living-17} and \textbf{Entity-30} \citep{santurkar2020breeds}: We first train on the source data for 5 epochs, tuning only the head for 3 epochs and then fine-tuning all for 2 more epochs, following LP-FT \citep{kumar2022fine} and using the Adam optimizer. We then fine-tune on the labeled target data for 15 epochs. We tune over the 3 learning rates \{0.0005, 0.0001, 0.00001\} for all methods and do not use any weight decay.
    \item \textbf{Waterbirds} \citep{sagawa2019distributionally}: We first start with a ResNet-50 pretrained on ImageNet and train on the source distribution for 300 epochs, taking the best checkpoint based on early stopping and using the Adam optimizer. We then fine-tune on the labeled target data for 100 total epochs. We tune over the 3 learning rates \{0.005, 0.001, 0.0005\} for all methods and use a weight decay of 0.0001. 
    \item \textbf{CelebA} \citep{sagawa2019distributionally}: We first start with a ResNet-50 pretrained on ImageNet and train on the source distribution for 50 epochs, taking the best checkpoint based on early stopping and using the Adam optimizer. We then fine-tune on the labeled target data for 50 total epochs. We tune over the 3 learning rates \{0.001, 0.0005, 0.0001\} for all methods and use a weight decay of 0.0001. 
    \item \textbf{Camelyon17} \citep{camelyon, koh2021wilds}: We start with a vision transformer, CLIP ViT-B/16~\citep{clip}, pre-trained on the CLIP datasets. Next we fine-tune the model on Camelyon17 train dataset using an SGD optimizer with initial learning rate $0.0001$ for 3 epochs. We use a cosine annealing learning rate scheduler~\citep{loshchilov2017sgdr} and batch size 32. Finally, we fine-tune on the labeled target data for 10 total epochs with the same setting as before, except we tune over learning rates
    \{$10^{-4}$, $10^{-5}$, $3 \times 10^{-5}$, $7 \times 10^{-5}$, $10^{-6}$, $3 \times 10^{-6}$, $10^{-7}$, $10^{-8}$\}
    \item \textbf{FMoW} \citep{fmow, koh2021wilds}: Similar to Camelyon17, we start with a vision transformer, CLIP ViT-B/16~\citep{clip}, pre-trained on the CLIP datasets. Next we fine-tune the model on FMoW train dataset using an SGD optimizer with initial learning rate $0.0003$ for 5 epochs. We use a cosine annealing learning rate scheduler~\citep{loshchilov2017sgdr} and batch size 32. Finally, we fine-tune on the labeled target data for 10 total epochs with the same setting as before, except we tune over learning rates \{$10^{-6}, 10^{-5}, 0.0003, 0.0001, 0.001, 0.01, 0.1, 0.25$\}.
\end{itemize}

\begin{table*}[]
\vspace{-10pt}
\center 
\renewcommand{\arraystretch}{1.0}
\begin{tabular}{lc}
\toprule
Method & mCE (\%) \\
\midrule
Vanilla ResNet-50~\citep{hendrycks2019robustness}  & 76.7  \\
\midrule
Cross-Val  & \textbf{61.7}  \\
\midrule
Full Fine-tuning & 62.8  \\
Gradual (First $\rightarrow$ Last) & 62.4  \\
Gradual (Last $\rightarrow$ First) & 63.6  \\
L1 Regularize~\citep{xuhong2018explicit} & 64.7  \\
Auto-SNR & 62.9 \\
Auto-RGN & \textbf{61.9} \\
\bottomrule
\end{tabular}
\caption{For completeness, we report the mean corruption error (mCE) on ImageNet-C, which weights the target distribution error by the difficulty of the corruption. We report the average accuracies on the target distribution in \cref{fig:manual} and \cref{tab:auto-results}, and we see that the two metrics are correlated for our experiments, as the best performing methods according to average accuracy are also the best for mCE.}
\label{tab:mce-supervised}
\end{table*}

In \cref{tab:mce-supervised}, for completeness, we additionally report the mean corruption error (mCE), which weights the target distribution error by the difficulty of the corruption, on ImageNet-C since that is a common metric used for the dataset. We find that these results give similar conclusions as the average accuracies reported in \cref{tab:auto-results}, with Cross-Val and Auto-RGN performing the best.

\begin{table*}[]
\vspace{-10pt}
\center 
\resizebox{\textwidth}{!}{%
\renewcommand{\arraystretch}{1.2}
\begin{tabular}{ccccccccc}
\toprule
First Layers & Block 1 & Block 2 & Block 3 & Block 4 & Last & All & No Tuning \\
\midrule
74.6 (1.9) & 75.3 (1.9) & 81.6 (1.2) & \textbf{86.1 (0.7)} & 82.9 (1.5) & 72.6 (1.3) & 77.3 (0.4) & 65.6 (0) \\
\bottomrule
\end{tabular}
}
\caption{\footnotesize Surgical fine-tuning results on Living-17 (\% accuracy) initialized with a CLIP ViT-B/16. We find that similar to the results using a ResNet-50 architecture, fine-tuning a single parameter block with this vision transformer architecture outperforms full fine-tuning, and in particular, a middle block still performs best for this feature-level shift.}
\label{tab:vit}
\end{table*}

We additionally include an ablation where we use a CLIP ViT-B/16 (Vision Transformer) as our initial model pretrained on the WebImageText dataset. This model consists of 2 first layers followed by 4 transformer blocks and 2 last layers. We analyze surgical fine-tuning with this model architecture on the Living-17 dataset, which has a feature-level shift. In \cref{tab:vit}, we find that our results in \cref{sec:surgical_ft} hold similarly using this vision transformer, as tuning only a middle block outperforms full fine-tuning or tuning any other block of layers.

\subsection{More large vision transformer experiments}
Prior work has shown that when fine-tuning vision transformers, task accuracy on held-out data is substantially higher when using the AdamW optimizer rather than SGD~\citep{kumar2022sgdfinetuning}.
We evaluate the performance of surgical fine-tuning on two large pre-trained vision transformer models (CLIP ViT-B/16 and ViT-L/14) while fine-tuning with AdamW.
We follow the experimental setting of~\citet{kumar2022sgdfinetuning} as closely as possible:
\begin{itemize}[leftmargin=*]
    \item \textbf{Camelyon17} \citep{camelyon, koh2021wilds}. 
    We first train a pre-trained CLIP ViT-B/16~\citep{clip} model on the  train split of the Camelyon17 dataset using an AdamW optimizer with initial learning rate $10^{-6}$ for 3 epochs. 
    We use a cosine annealing learning rate scheduler~\citep{loshchilov2017sgdr} and batch size 32. 
    Finally, we fine-tune on the labeled target data for 10 total epochs with the same setting as before, except we tune over learning rates
    \{$10^{-4}$, $10^{-5}$, $3 \times 10^{-5}$, $7 \times 10^{-5}$, $10^{-6}$, $3 \times 10^{-6}$, $10^{-7}$, $10^{-8}$\}.
    
    \item \textbf{FMoW} \citep{fmow, koh2021wilds}.
    We similarly train a pre-trained CLIP ViT-B/16 or ViT-L/14 model on the train split of the FMoW dataset with initial learning rate $10^{-5}$ for 5 epochs.
    We use a cosine annealing learning rate scheduler~\citep{loshchilov2017sgdr} and batch size 32. 
    For the ViT-L/14 setting, we use larger $336 \times 336$ images.
    Finally, we fine-tune on labeled target data for 10 total with the same setting as before, tuning over learning rates
    \{$10^{-4}$, $10^{-5}$, $3 \times 10^{-5}$, $7 \times 10^{-5}$, $10^{-6}$, $3 \times 10^{-6}$, $10^{-7}$, $10^{-8}$\}.
\end{itemize}

\begin{table}[]
\centering \begin{tabular}{lccccc}
    \toprule
    Dataset & Camelyon17 & Camelyon17 & FMoW & FMoW & FMoW \\
    Architecture & ViT-B/16 & ViT-B/16 & ViT-B/16 & ViT-B/16 & ViT-L/14 \\
    Optimizer & SGD & AdamW & SGD & AdamW & AdamW \\
    \midrule
    No fine-tuning & 86.2 & 96.2 & 35.5 & 41.7 & 50.3 \\
    \midrule
    All & 92.3 (1.7) & 96.4 (0.4) & 38.9 (0.5) & 29.2 (2.6) & \textbf{52.9 (1.2)} \\
    Embedding & \textbf{95.6 (0.4)} & \textbf{96.9 (0.1)} & 36.0 (0.1) &  41.8 (0.1) & 50.1 (1.6)\\
    First three blocks & 92.5 (0.5) & 95.0 (0.2) & 39.8 (1.0) & 26.7 (1.3) & 51.6 (0.9)\\
    Last three blocks & 87.5 (4.1) & 96.6 (0.1) & \textbf{44.9 (2.6)} & \textbf{46.1 (2.3)} & \textbf{52.4 (0.6)} \\
    Last layer & 90.1 (1.5) & 96.7 (0.1) & 36.9 (5.5) & 42.6 (0.1) & \textbf{52.3 (0.3)}\\
    \bottomrule
\end{tabular}
\captionof{table}{
\label{tab:vit_adamw}
OOD set accuracies after surgically fine-tuning different parameters in vision transformer models for two WILDS datasets. Bold numbers represent superior results for a dataset, and we also report the standard deviation from runs with 3 different seeds.}
\end{table}

We show AdamW fine-tuning results in \cref{tab:vit_adamw}. 
While surgically fine-tuning the right layer continues to improve over no fine-tuning, the relative advantage compared to fine-tuning all layers is smaller than what we observed with SGD in ~\cref{tab:vit}.
We observe instability in fine-tuning all layers of a ViT-B/16 network for FMoW target distributions.
Fine-tuning later layers seems to consistently improve performance without running into such instability issues.
We leave further investigation of such properties of fine-tuning ViT models to future work.

\subsection{Complete unsupervised adaptation results} \label{subsec:complete_tta}

\textbf{Method.} We experiment with MEMO~\citep{zhang2021memo} as our unsupervised adaptation method. Given a test image $x$, MEMO first takes an "adapt" stage, where it minimizes the marginal entropy over standard augmentations of $x$, then it takes a "test" step, where the network predicts a label for $x$. Note that MEMO tests a single image at a time, i.e., the test batch size is 1. We also consider the two following variations.
\begin{itemize}[leftmargin=*]
    \item \textbf{Episodic}: This version is discussed in the origin work. Here after predicting the labels, we reset the weights of the network to the pre-trained ones, i.e., we undo the "adapt" stage.
    \item \textbf{Online}: We also consider the online variation of MEMO, where we do not reset the weights after each test image, i.e., we accumulate the "adaptation" changes over test images.
\end{itemize}

We have also experimented with TENT~\citep{wang2020tent}, but since TENT only updates the batchnorm modules (whereas MEMO updates all parameters), freezing parameters with TENT did not produce expected results and we did not pursue it further.

\textbf{Dataset and Network.} We use the CIFAR-10-C and ImageNet-C corruption datasets for our experiments. For CIFAR-10-C, we use the same ResNet-26~\citep{he2016resnet} pre-trained model used by MEMO~\citep{zhang2021memo}, which is available in their GitHub repository. For ImageNet-C, we use RVT$^*$-small architecture and pre-trained weights used by~\citet{zhang2021memo}.

\textbf{Hyper-parameters.} For CIFAR-10-C, we use 1000 corrupted test image for hyper-parameter tuning and report the test accuracy on the held out 9000 examples. We consider the following hyper-parameter grid:

\begin{itemize}[leftmargin=*]
    \item Learning rate: $10^{-3}$, $10^{-4}$, $10^{-5}$ and $10^{-6}$, then $2.5x$, $5x$, $0.5x$ of the best learning rate from before.
    \item Steps: $1$, $2$.
    \item Weight decay: $0$, $10^{-3}$, $10^{-2}$, $10^{-1}$.
    \item Number of augmentations per image: 32
\end{itemize}

For ImageNet-C, we do not do any hyper-parameter tuning and simply use the best hyper-parameters described by~\citet{zhang2021memo}, which are:
\begin{itemize}[leftmargin=*]
    \item Learning rate: 0.00001
    \item Weight decay: 0.01
    \item Steps: 1
    \item Number of augmentations per image: 32 (This is 64 in~\citet{zhang2021memo}, but we use 32 for computational cost)
\end{itemize}
Moreover, for ImageNet-C, the experiments are done over $2000$ test images (the first $2$ test image per class for each of the $1000$ classes) instead of the entire test set of $50,000$ images, for computational reasons. This produces numbers that are slightly different from the original work, hence we include the no adaptation baselines as well for fair comparison.

Finally, in practice, we saw that AdamW and SGD optimizers work better for the episodic and online setting, respectively.

\textbf{Layers.} We use the following naming convention for the layers of ResNet-26:
\begin{itemize}[leftmargin=*]
    \item \textbf{First}: Only the first \textit{conv} layer.
    \item \textbf{First 2 layers}: The first \textit{conv} layer of the entire network, and the first \textit{conv} layer within the first block.
    \item \textbf{First 2 blocks}: The first \textit{conv} layer, and the first block.
    \item \textbf{Last}: The last fully-connected (FC) layer.
\end{itemize}
For RVT$^*$-small:
\begin{itemize}[leftmargin=*]
    \item \textbf{First layer}: First conv layer inside the first transformer block.
    \item \textbf{First block}: First transformer block.
    \item \textbf{Last}: Head or the final fully connected layer.
\end{itemize}

\textbf{Results.} \cref{tab:tta_results_detailed} and \cref{tab:tta_results_detailed_imagenet} show results for MEMO with parameter freezing for CIFAR-10-C and ImageNet-C respectively on severity level 5 and on a representative set of corruptions. 

\begin{table*}[t]
\vspace{-20pt}
\center \resizebox{\textwidth}{!}{%
\renewcommand{\arraystretch}{1.2}
\begin{tabular}{cc|cccccccccc}
\toprule
& & \multicolumn{10}{c}{Corruption} \\
Settings & Tuned Layers & gauss & impulse & shot & fog & frost & snow & elast & brit & contr & defoc \\
\midrule
No Adaptation & & 51.6 & 49.7 & 55.2 & 72.0 & 66.9 & 75.9 & 74.4 & 85.9 & 70.3 & 75.9 \\
\midrule
Episodic & All & 56.31 & 56.39 & 59.94 & 77.42 & 71.93 & 78.63 & \textbf{78.99} & \textbf{88.15} & 72.66 & 76.32 \\
& First layer & 53.11 & 51.29 & 56.53 & 73.27 & 68.41 & 77.16 & 76.46 & 86.53 & 70.82 & 76.11 \\
& First 2 layers & 53.20 & 51.21 & 56.58 & 73.48 & 68.64 & 77.06 & 76.41 & 86.47 & 70.67 & 76.15 \\
& First 2 blocks & 53.20 & 51.36 & 56.63 & 73.37 & 68.61 & 77.22 & 76.45 & 86.56 & 70.73 & 76.09 \\
& Last & 51.69 & 49.82 & 55.24 & 72.01 & 67.05 & 75.91 & 74.56 & 85.98 & 70.28 & 75.98 \\ 
\midrule
Online & All & 51.48 & 49.53 & 55.85 & 75.69 & 72.64 & 77.36 & 76.16 & 86.83 & 70.88 & 79.89 \\
& First layer & \textbf{68.14} & 59.21 & 71.10 & \textbf{79.61} & \textbf{76.20} & 79.29 & 76.86 & 86.72 & 74.26 & \textbf{82.18} \\
& First 2 layers & 68.00 & \textbf{60.92} & \textbf{73.07} & 79.22 & 76.16 & 79.34 & 76.18 & 86.65 & 73.97 & 81.97 \\
& First 2 blocks & 67.84 & 59.30 & 70.96 & 79.11 & 75.91 & \textbf{79.44} & 76.46 & 86.91 & \textbf{74.67} & 82.12 \\
& Last & 51.61 & 49.71 & 55.21 & 71.99 & 66.92 & 75.87 & 74.49 & 86.00 & 70.27 & 75.91 \\
\bottomrule
\end{tabular}
}
\caption{MEMO~\citep{zhang2021memo} with parameter freezing results on CIFAR-10-C on 10 representative corruptions and severity level 5. Bold numbers represent superior results for a particular corruption.}
\label{tab:tta_results_detailed}
\end{table*}

\begin{table*}[t]
\center \resizebox{\textwidth}{!}{%
\renewcommand{\arraystretch}{1.2}
\begin{tabular}{cc|cccccccccccccc}
\toprule
& & \multicolumn{14}{c}{Corruption} \\
Settings & Tuned Layers & gauss & impul & shot & fog & frost & snow & brit & contr & elast & glass & motion & zoom & pixel & jpeg \\
\midrule
No Adapt & & 43.80 & 45.90 & 42.40 & 55.20 & 45.95 & 50.05 & 73.95 & 53.25 & 35.00 & 19.35 & 40.35 & 32.05 & 52.60 & 60.60 \\
\midrule
Episodic & All & 45.35 & 47.40 & 44.60 & 55.35 & 47.05 & 50.25 & 74.35 & \textbf{54.85} & 36.25 & 20.55 & 41.40 & 33.95 & 55.55 & \textbf{61.70} \\
& First layer & 44.00 & 46.00 & 42.40 & 55.20 & 45.90 & 50.10 & 74.0 & 53.30 & 35.20 & 19.45 & 40.60 & 32.15 & 52.80 & 60.60 \\
& First block & 44.10 & 46.30 & 42.60 & 55.40 & 46.00 & 50.15 & 74.10 & 53.20 & 35.15 & 19.75 & 40.75 & 32.25 & 53.20 & 60.75 \\
& Last & 43.75 & 45.95 & 42.40 & 55.25 & 46.0 & 50.05 & 73.95 & 53.30 & 35.05 & 19.35 & 40.40 & 32.05 & 52.60 & 60.60 \\
\midrule
Online & All & 1.05 & 0.90 & 1.40 & 0.35 & 3.40 & 1.60 & 0.80 & 0.80 & 2.75 & 1.05 & 2.50 & 2.25 & 2.70 & 3.40 \\
& First layer & 44.70 & 46.30 & 43.40 & 46.50 & 47.0 & 50.9 & 74.15 & 48.85 & 36.40 & 20.20 & \textbf{41.65} & 32.50 & 55.70 & 61.05 \\
& First block & \textbf{46.50} & \textbf{49.65} & \textbf{46.75} & \textbf{55.80} & \textbf{47.40} & \textbf{52.55} & \textbf{74.80} & 54.15 & \textbf{40.50} & \textbf{24.75} & 41.45 & \textbf{34.15} & \textbf{56.85} & 61.05 \\
& Last & 43.80 & 45.90 & 42.40 & 55.15 & 45.95 & 50.00 & 73.90 & 53.20 & 35.05 & 19.35 & 40.35 & 32.05 & 52.60 & 60.60 \\
\bottomrule
\end{tabular}
}
\caption{MEMO~\citep{zhang2021memo} with parameter freezing results on ImageNet-C on 14 representative corruptions and severity level 5. Bold numbers represent superior results for a particular corruption.}
\label{tab:tta_results_detailed_imagenet}
\end{table*}

\end{document}